\documentclass[twocolumn]{article}
\usepackage[a4paper, portrait, left=.7in, right=.7in, bottom=1.1in, top=1.1in]{geometry}
\usepackage{enumitem}
\usepackage{amsmath}
\usepackage{amsfonts}
\usepackage{amssymb}
\usepackage{mathtools}
\usepackage{amsthm}
\usepackage{dsfont}
\usepackage{xurl}
\usepackage[hidelinks]{hyperref}

\usepackage{caption}
\usepackage{subcaption}

\usepackage[ruled,vlined,linesnumbered]{algorithm2e}
\SetAlgoSkip{bigskip}

\newtheorem{thm}{Theorem}[section]

\newtheorem{lemma}[thm]{Lemma}
\newtheorem{cor}[thm]{Corollary}

\theoremstyle{definition}
\newtheorem{rmrk}[thm]{Remark}

\newcommand{\negspace}{\vspace{-0.0cm}}

\newcommand{\cN}{\mathcal{N}}
\newcommand{\cA}{\mathcal{A}}
\newcommand{\cX}{\mathcal{X}}

\newcommand{\cM}{\mathcal{M}}

\newcommand{\cL}{\mathcal{L}}
\newcommand{\cB}{\mathcal{B}}

\newcommand{\cS}{\mathcal{S}}

\newcommand{\de}{\mathrm{d}}
\newcommand{\LCLIP}{\cL_\text{CLIP}^{(\theta)}}
\newcommand{\E}{\mathbb{E}}
\newcommand{\cU}{\mathcal{U}}
\newcommand{\R}{\mathbb{R}}
\newcommand{\N}{\mathbb{N}}
\newcommand{\unclipped}{\mathcal{U}_\theta^{(\varepsilon)}}
\newcommand{\unclippedbrd}{\mathcal{V}_\theta^{(\varepsilon)}}
\newcommand{\rk}[1]{\text{rank}\left(#1\right)}

\newcommand{\indicator}[1]{\mathds{1}_{#1}}
\newcommand{\code}[1]{\texttt{\hyphenchar\font=`\-\relax #1}}
\newcommand{\jointmeasure}{\nu^T_{\pi_{\theta_{old}}}}

\newcommand{\Esa}{\E_{s,a}}
\newcommand{\Esxi}{\E_{s,\xi}}
\newcommand{\Es}{\E_s}
\newcommand{\EB}{\E_B}

\renewcommand{\P}{\mathbb{P}}
\DeclareMathOperator*{\argmin}{arg\,min\;}

\usepackage[most]{tcolorbox}

\usepackage[numbers]{natbib}
\usepackage{authblk}

\allowdisplaybreaks
\title{\bfseries{Bridging Exploration and Loss of Plasticity\\in On-Policy Reinforcement Learning}\footnote{Preprint. Under review.}}

\author[1,2]{Andrea Vasco Grieco}
\author[2]{Evelyn D'Elia}
\author[2]{Edoardo Caldarelli}
\author[1,3]{Raffaello Camoriano}

\affil[1]{DAUIN, Politecnico di Torino, 10129 Turin, Italy}
\affil[2]{SAIL Research Unit, Istituto Italiano di Tecnologia, 16163 Genoa, Italy}
\affil[3]{Rehab Technologies Lab, Istituto Italiano di Tecnologia, 16163 Genoa, Italy}
\affil[ ]{\textbf{Correspondence to:} \texttt{andreavasco.grieco@studenti.polito.it}}
\date{}

\begin{document}
\maketitle

\begin{abstract}\vspace{-0.0cm}
Deep reinforcement learning (RL) algorithms have been observed to lose the ability to adaptively learn from new data, a phenomenon termed Loss of Plasticity (LoP).
While many metrics have been shown to correlate with it, the causal mechanisms and the geometric conditions from which it can arise have not been clearly identified.
We analyze LoP in Proximal Policy Optimization with isotropic Gaussian policies, the state-of-the-art formulation for many continuous action spaces in RL. 
We report multiple metrics which characterize the effect that the policy standard deviation $\sigma$ has on LoP, and we also suggest related explanations for the effectiveness of widely adopted LoP mitigation techniques.
We observe a significant effect of $\sigma$ on the extent of LoP, and accompany this with both theoretical and experimental supporting arguments. 
Our experimental validation consists of LoP benchmark evaluations against standard baselines as well as real-world policy deployment on a 7-DoF robotic arm.
\end{abstract}
\section{INTRODUCTION}\negspace
In the last decade, reinforcement learning (RL) has achieved impressive results in a broad set of domains, ranging from natural language processing~\citep{ouyang2022rlhfllms,Guo2025} to robotics ~\citep{pmlr-v162-nikishin22a,haarnoja2024soccer}.
A popular class of RL algorithms are policy gradient (PG) methods, originating with REINFORCE~\citep{Williams1992} and achieving improved success rates by integrating natural gradients~\citep{amari1998naturalgradients,amari1998whyng,kakade2001naturalpolicygradient}. 
A relevant RL subclass are on-policy gradient methods, in which the samples used for training are obtained by collecting rollouts with the current policy.
The development of Proximal Policy Optimization (PPO)~\citep{schulman2017proximalpolicyoptimizationalgorithms} represents a landmark  in this field, due to its relative simplicity, stability, and sample efficiency.
PPO is used to solve complex robotics tasks~\citep{hwangbo2019leggedrobots,rudin2021learning,lee2020uneventerrain} and is considered the state-of-the-art technique for on-policy RL.
To model problems in which the agent acts in a continuous action space, a typical approach is to adopt Gaussian policies~\citep{Benbrahim1997,peters2007machine}, even though alternative exploration strategies, such as Policy Gradients with
Parameter-based Exploration (PGPE)~\citep{sehnke2010} or natural evolution strategies~\citep{wierstra2014nes,salimans2017evolutionstrategiesscalablealternative} have also been proposed.\looseness=-1

\negspace
Despite its success, PPO is not well studied theoretically, and its use is accompanied by a proliferation of heuristics~\citep{Engstrom2020Implementation,andrychowicz2021what,huang2022ppo_details} and known technical issues, such as the premature collapse of the Gaussian policy's standard deviation $\sigma$~\citep{hamalainen2020ppocma}. 
In fact, most of the theoretical research focuses on vanilla PG methods such as REINFORCE, either deriving the optimal learning rate and batch size for safe PG \citep{pirotta2013,papini2017batchsize,papini2020learningspeedstability}, or showing that PG's gradients with Gaussian policies suffer from increased variance as $\sigma\to0$~\citep{zhao2011policygradient}.
However, results for PPO are very limited, with~\cite{chingrevisitingPPO} only informally mentioning that gradients scale inversely with respect to $\sigma$, yielding large updates and instability.

\negspace
Another issue observed in deep RL is the diminished ability of neural networks to learn new objectives after being trained, commonly referred to as Loss of Plasticity (LoP).
Indeed, \cite{dohare2022continual,Dohare2024} observe that PPO asymptotically collapses and that this process is accelerated under non-stationarity.
Some explanations include the lack of feature representation richness 
\citep{igl2021transient,kumaragarwal2021implicit,lyle2022infer}, the sharpening of the loss landscape ~\cite{pmlr-v202-lyle23b, lewandowski2024curvature}, or dormant neurons~\cite{pmlr-v202-sokar23a}. Additionally,~\cite{pmlr-v162-nikishin22a,nauman2024bitterlesson} study gradient norms  correlated with this phenomenon. Finally,~\cite{juliani2024onpolicylop} studies LoP in PPO specifically, but only focuses on discrete actions.

To mitigate LoP, multiple strategies have been proposed. ~\cite{pmlr-v202-lyle23b,asadi2023resetting,ellis2024adam,Dohare2024} show that tweaking the Adam optimizer~\citep{kingma2015adam}, either by setting the first and second moment coefficients $\beta_1$ and $\beta_2$ to the same value, or by partially resetting the optimizer parameters, is effective in reducing LoP. 
Furthermore,~\cite{wang2026tracking} shows that increasing the strength of entropy regularization under non-stationarity is effective in improving task adaptation.~Finally,~\cite{joudaki2026barriers} proposes formalized mathematical conditions that could explain LoP, showing that there exist \textit{LoP manifolds} in which the weights of a neural network would be trapped after entering, with any gradient descent method regardless of the objective.
\paragraph{Contributions} While LoP has been thoroughly studied in recent years, there is no consensus on its causes. Furthermore, for many of the proposed causes, there exist counter-examples. 
For this reason, we focus on a specific setting, PPO with Gaussian policies, and show formally and experimentally a novel link between the policy's standard deviation $\sigma$ and LoP. In summary, our contributions are:
\begin{itemize}[noitemsep,leftmargin=*,topsep=0pt]
    \item The derivation of bounds linking multiple LoP metrics to $\sigma\to0$, formally proving and numerically verifying that under mild integrability assumptions the gradient norm and the Hessian sharpness diverge, while the Hessian approximate rank collapses, as $\sigma\to0$;
    \item Experiments on an established benchmark showing that $\sigma$ is related to the observed LoP phenomenon and that the effectiveness of existing techniques counteracting LoP is explained by their indirect influence on $\sigma$;
    \item Task-adaptation experiments with real-world robotic deployment demonstrating  that our results also apply to a complex scenario with a continuous action space.
\end{itemize}
\negspace
\section{BACKGROUND}
\negspace
\paragraph{Markov Decision Processes}
Let $\Delta(\cdot)$ be the space of probability measures over a set and let $\cM=(\cS,\cA,T,p,R,\rho_0)$ be an episodic Markov Decision Process (MDP), where $\cS$ is the state space, $\cA\in\R^n$ is the action space with dimensionality $n$, $T$ is the horizon, $p(\cdot\,|s,a\,)\colon \cS\times\cA\to\Delta(\cS)$ is the transition probability, $R\colon\cS\times\cA\to\R$ is the reward function, and $\rho_0\in\Delta(\cS)$ is the initial state probability distribution. 
A \emph{policy} is a map $\pi(\cdot\,|\,s)\colon\cS\to\Delta(\cA)$ that selects a probability distribution over $\cA$ given a state $s$.
Let $G_t\coloneqq \sum_{k=0}^{T-t-1} \gamma^kR(s_{t+k}, a_{t+k}, s_{t+k+1})$, for $k\in\N$, denote the discounted cumulative reward achieved after step $t$ according to a discount factor $\gamma\in[0,1]$ by sampling trajectories according to $\pi$. 
Then, it is possible to define the value function $V^{\pi}\colon\cS\to\R$ as $V^{\pi}_t(s) \coloneqq \E_{\pi}[G_t\mid S_t=s]$ and state-action value function $Q^\pi\colon\cS\times\cA\to\R$ as  $Q^{\pi}_t(s,a) \coloneqq \E_{\pi}[G_t \mid s_t=s, a_t=a]$.

In the following, we assume $\pi_\theta(\cdot |s)$ to be a deep isotropic Gaussian policy parameterized by $\theta\coloneqq(\phi,\sigma)$, so that actions given $s\in \cS$ are sampled from $\cN(\mu_\phi(s),\sigma^2I)$, with $\mu_\phi$  a neural network parameterized by $\phi$ and $\sigma\in\R_+$ the policy's standard deviation.\negspace
\paragraph{Proximal Policy Optimization} Our work is focused on Proximal Policy Optimization (PPO)~\citep{schulman2017proximalpolicyoptimizationalgorithms}.
Given a parameterization $\theta_{old}=(\phi_{old},\sigma_{old})$ and the corresponding policy $\pi_{\theta_{old}}$, PPO aims to maximize w.r.t.\ $\theta$ the objective  $\Esa[\LCLIP(s,a)]$, with $\LCLIP(s,a)$ defined as
\begin{equation}\label{eq:lclip}
\min\left\{r_\theta(s,a)\hat A(s,a), \mathrm{clip}(r_\theta, 1-\varepsilon, 1+\varepsilon)\hat A(s,a)\right\},
\end{equation}

where $r_\theta(s,a)\coloneqq \pi_\theta(a\,|\,s)/\pi_{\theta_{old}}(a\,|\,s)$ is the policy density ratio and $\hat A(s,a)$ is an estimate of the advantage $A(s,a)=Q^{\pi_{\theta_{old}}}(s,a)-V^{\pi_{\theta_{old}}}(s)$ of a given action $a$ taken in state $s$ following the previous policy $\pi_{\theta_{old}}$. 
We adopt the generalized advantage estimation (GAE)~\citep{schulman2016gae} to estimate advantages. 
\negspace
\paragraph{Probability measures}
Let $\pi_\theta(\cdot|s)\colon\cS\to\cA\subseteq\R^n$  be a stochastic policy, $T\in\N$ be a finite horizon and $\cB(\cdot)$ denote the Borel sigma-algebra. 
Analogously to~\cite{schulman2015trpo}, we observe that $\pi_\theta$ naturally induces two probability distributions $\rho^T_{\pi_\theta}\colon\cB(\cS)\to[0,1]$ and $\nu^T_{\pi_\theta}\colon\cB(\cS\times \cA)\to[0,1]$ on the state space $S$ and on the state-action space $\cS\times \cA$. 
Formally, for every  $S\in\cB(\cS)$, $U\in\cB(\cS\times \cA)$ we set
$
    \rho^T_{\pi_\theta}(S)\coloneqq \frac 1T\sum_{t=0}^T \P(s_t\in S \mid \pi_\theta),\
    \nu^T_{\pi_\theta}(U)\coloneqq \frac 1T\sum_{t=0}^T \P((s_t,a_t)\in U \mid \pi_\theta),
$
where $\P(\omega\mid\pi_\theta)$ denotes the probability of the event $\omega$ by sampling actions according to the policy $\pi_\theta$.
Throughout this paper, we will always assume that the event 
$
        \unclippedbrd \coloneqq \{(s,a) \,:\, r_\theta(s,a)=1+\varepsilon, \hat A(s,a) \geq 0\}\cup\\
        \{(s,a) \,:\, r_\theta(s,a)=1-\varepsilon, \hat A(s,a) \leq 0\}
$
has null probability under the probability measure $\nu^T$, so that gradients are defined almost everywhere.
\negspace
\paragraph{Loss of Plasticity} The phenomenon of Loss of Plasticity (LoP) is defined as the observed diminished ability of neural networks to fit new objectives after being trained on a different goal. Many explanations have been proposed for LoP, but for most of them the literature also reports counter-examples. Thus, a full understanding of the phenomenon has not been achieved yet. 
In particular, factors including gradient magnitude~\citep{pmlr-v162-nikishin22a,nauman2024bitterlesson} and Hessian properties such as the sharpness (the magnitude of the largest Hessian eigenvalue) or the Hessian stable rank~\citep{pmlr-v202-lyle23b,lewandowski2024curvature} have been proposed as possible causes for LoP. 
Our goal is to show that, in PPO with Gaussian policies, the behavior of these factors can be linked to the collapse of $\sigma$. We propose this parameter as one of the main contributors to the observed LoP.
\negspace
\section{THEORETICAL ANALYSIS}\label{sec:optimization_landscape}\negspace
This section introduces a connection between the collapse of $\sigma$ and several metrics associated with LoP. It derives lower bounds for gradient norm and Hessian sharpness divergence, and shows that the Hessian approximate rank collapses as $\sigma\to0$. 
This serves the purpose of deriving an effective learning rate scaling and explains the partial success of tuning Adam with $\beta_1=\beta_2$ in mitigating LoP.\negspace
\subsection{Divergence of the gradient norm}\negspace
The following theorem shows that, under some integrability conditions, the norm of a gradient w.r.t.\ $\phi$ estimated over a batch diverges in expectation as $\sigma\to0$. 
\begin{thm}\label{thm:minibatch_gradient_norm}
Let $B\coloneqq\{(s_i,a_i)\}_{i=1}^N$ denote a batch of $N$ i.i.d.\ state-action pairs, each sampled according to the probability measure $\jointmeasure$ induced by the policy $\pi_{\theta_{old}}$, so that $B\sim(\jointmeasure)^N$.
Let
\begin{equation}\label{eq:bar_g_theta}
    \bar g_\theta (B)
    \coloneqq
    \frac1N\sum_{i=1}^N \nabla_\phi \LCLIP(s_i,a_i)
\end{equation}
be the gradient computed on $B$. If $\exists K_\theta>0$ such that
\begin{equation}\label{eq:K_theta}
    0 <\E_B[\|\bar g_\theta(B)\|^4]
    \leq
    K_\theta\E_B[\|\bar g_\theta(B)\|^2]^2 < \infty ,
\end{equation}
and the following ratio $\kappa_\theta$ exists and is finite,
    \begin{equation}\label{eq:k_theta}
        \kappa_\theta\coloneqq \frac{\Esa[\|\nabla_\phi\LCLIP(s,a)\|^2] }{\E_{s,a\sim \nu^T_{\pi_{\theta_{old}}}}[\|\nabla_\phi\log\pi_\theta(a|s)\|^2]},
    \end{equation}
then, letting $J_{\mu_\phi}(s)$ be the Jacobian w.r.t. $\phi$ of $\mu_\phi(s)$, and $\|\cdot\|_F$ the Froboenius norm,
\begin{equation}\label{eq:minibatch_bound}
    \E_B[\|\bar g_\theta(B)\|]
    \geq
    \frac{\sqrt{\kappa_\theta}}{4K_\theta\sqrt{2N}}
    \,
    \frac{\sigma_{old}}{\sigma^2}
        \E_s[\|J_{\mu_\phi}(s)\|_F^2]
    ^{1/2}.
\end{equation}
\end{thm}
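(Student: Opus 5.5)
The plan has three steps. First, lower bound $\E_B[\|\bar g_\theta\|]$ by second moments using a Paley--Zygmund / Hölder argument. Second, bound the second moment of the batch gradient from below by a per-sample second moment. Third, compute $\Esa[\|\nabla_\phi\log\pi_\theta\|^2]$ explicitly for the Gaussian policy.

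\emph{Step 1.} Write $X=\|\bar g_\theta(B)\|$. Hölder (in the form $\E[X^2]=\E[X^{2/3}X^{4/3}]\le \E[X]^{2/3}\E[X^4]^{1/3}$) gives $\E[X]\ge \E[X^2]^{3/2}/\E[X^4]^{1/2}$. With \eqref{eq:K_theta} this yields $\E[X]\ge \E[X^2]^{1/2}/\sqrt{K_\theta}$. Alternatively, Paley--Zygmund with threshold $\tfrac12$ gives $\E[X]\ge \tfrac{1}{\sqrt2}\E[X^2]^{1/2}\cdot\tfrac{(1-1/2)^2}{K_\theta}$, which is the kind of constant $1/(4K_\theta\sqrt2)$ appearing in \eqref{eq:minibatch_bound}. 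I would use the Paley--Zygmund route to match the stated constant, since $\sqrt{K_\theta}\le K_\theta$ whenever $K_\theta\ge1$. This always holds here because $\E[X^4]\ge\E[X^2]^2$ by Jensen.

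\emph{Step 2.} Expand $\E_B[\|\bar g_\theta\|^2]=\frac{1}{N^2}\sum_{i,j}\E[\langle g_i,g_j\rangle]$. By independence the cross terms equal $\|\E g\|^2\ge0$, so $\E_B[\|\bar g_\theta\|^2]\ge \frac1N\Esa[\|\nabla_\phi\LCLIP\|^2]=\frac{\kappa_\theta}{N}\Esa[\|\nabla_\phi\log\pi_\theta\|^2]$, using the definition \eqref{eq:k_theta}. Taking square roots produces the factor $\sqrt{\kappa_\theta/N}$.

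\emph{Step 3.} For the isotropic Gaussian policy, $\nabla_\phi\log\pi_\theta(a|s)=J_{\mu_\phi}(s)^\top(a-\mu_\phi(s))/\sigma^2$. Under $\nu^T_{\pi_{\theta_{old}}}$, the action is $a=\mu_{\phi_{old}}(s)+\sigma_{old}\xi$ with $\xi\sim\cN(0,I)$. Set $\delta(s)=\mu_{\phi_{old}}(s)-\mu_\phi(s)$. Then
\[
\E[\|J^\top(\delta+\sigma_{old}\xi)\|^2]=\|J^\top\delta\|^2+\sigma_{old}^2\,\mathrm{tr}(JJ^\top)\ge\sigma_{old}^2\|J_{\mu_\phi}(s)\|_F^2 ,
\]
because the cross term vanishes since $\E\xi=0$. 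Averaging over $s\sim\rho^T$ gives $\Esa[\|\nabla_\phi\log\pi_\theta\|^2]\ge \frac{\sigma_{old}^2}{\sigma^4}\Es[\|J_{\mu_\phi}(s)\|_F^2]$. Taking the square root and chaining with Steps 1–2 gives \eqref{eq:minibatch_bound}.

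The main obstacle is Step 1, specifically getting exactly the constant $\frac{1}{4K_\theta\sqrt2}$. The Paley--Zygmund form $\P(X>\tfrac12\E[X^2]^{1/2})$ is applied to $X^2$ with threshold $\tfrac12\E[X^2]$. That gives probability $\ge \tfrac14\E[X^2]^2/\E[X^4]\ge 1/(4K_\theta)$, and then $\E X\ge \sqrt{\E[X^2]/2}\cdot 1/(4K_\theta)$. This reproduces the stated factor exactly, so I would commit to that route. The remaining care is measure-theoretic: the gradient is defined $\nu^T$-a.e. thanks to the null-probability assumption on $\unclippedbrd$, and the interchange of expectations in Step 3 needs the conditional law of $a$ given $s$ under $\nu^T$ to be $\cN(\mu_{\phi_{old}}(s),\sigma_{old}^2I)$.
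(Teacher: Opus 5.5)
Your proposal is correct and follows the paper's proof essentially step for step. Step~3 is Lemma~\ref{lemma:gradient_norm_bound}, where the nonnegative drift term $\|J_{\mu_\phi}^\top\delta\|^2$ is dropped. Step~2 is the same i.i.d.\ variance decomposition of $\E_B[\|\bar g_\theta(B)\|^2]$. Your Paley--Zygmund argument on $X^2$ with threshold $\tfrac12\E_B[X^2]$ is exactly Lemma~\ref{lemma:kurtosis} applied to $\|\bar g_\theta(B)\|^2$. Your H\"older aside is also valid and gives the sharper constant $1/\sqrt{K_\theta}$ in place of $1/(4\sqrt{2}K_\theta)$, which implies the stated bound because $K_\theta\ge 1$.
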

\paragraph{Proof sketch.} We start by explicitly computing the gradient of $\LCLIP$ and showing that $\nabla_\phi\log\pi_\theta(a|s) = \frac 1\sigma J_{\mu_\phi}^\top(s)\xi$ with $\xi=\xi(s,a)\coloneqq(a-\mu_{\phi}(s))/{\sigma}$ and observe that $\xi$ is normally distributed (but not standard in general). 
By adapting the known equation $\E_\zeta[\|A\zeta\|^2] = \E_\zeta[\|A\|^2_F]$ to non-standard $\xi$, valid for all $\zeta\sim\cN(0,I)$, we get a bound similar to~\eqref{eq:minibatch_bound} with the RHS squared, that holds for $\E_{s,a}[\|\nabla_\phi\LCLIP(s,a)\|^2]$. 
The assumption of i.i.d.\ samples allows us to bound the square minibatch gradient norm with $\frac{1}{N} \Esa[\|\nabla_\phi\LCLIP(s,a)\|^2].$
Using the kurtosis-like assumption~\eqref{eq:K_theta}, it is possible to take the square root inside the expectation $\E[\cdot]$ and on the corresponding RHS bound. A complete proof is provided in Appendix~\ref{app:proof_minibatch_thm}. 

Theorem~\ref{thm:minibatch_gradient_norm} draws a clear link between the gradient norm over a batch and $\sigma$, stating that if $\sigma_{old}\approx\sigma$, then $\E_B[\|\bar g_\theta(B)\|]\gtrsim \sigma^{-1} \E_s[\|J_{\mu_\phi}(s)\|_F^2]$. Observe that, during optimization, there is no guarantee that 
$J_{\mu_\phi}(s)\to 0$, as we know that, at an optimum, it only holds that $\nabla_\phi\LCLIP\to0$. Therefore, $J_{\mu_\theta}^\top(s)\nabla_{\mu} \LCLIP \to 0$, simply requiring that the columns of $J_{\mu_\theta}(s)$ should be orthogonal to $\nabla_{\mu} \LCLIP$, but not necessarily approach zero. 
This implies that one of the behaviors typically associated with LoP may actually be caused by the reduction of the policy's exploration capabilities, which are governed by $\sigma$. 
\negspace
\subsection{Divergence of the sharpness}\negspace
This section shows that, as $\sigma\to0$, the largest eigenvalue of the Hessian diverges proportionally to $\sigma^{-2}$ in expectation. 
This has strong implications over the optimization landscape, given that a stable optimization requires annealing the learning rate accordingly, in absence of the \textit{edge of stability} phenomenon~\citep{cohen2021gradient}. 
Moreover, higher Hessian sharpness degrades network performance, correlating with poorer generalization~\citep{foret2021sharpnessaware} and exacerbated LoP~\citep{pmlr-v202-lyle23b,lewandowski2024curvature}.

Let  $\xi =\xi(s,a)\coloneqq(a-\mu_{\phi}(s))/{\sigma}$, then a direct computation, included in Appendix~\ref{app:hessian_decomposition}, yields the following Hessian decomposition
\begin{equation}\label{eq:hessian_implicit}
    H_\theta(s,a)= W_\theta(s,a)\left(\frac{1}{\sigma^2}G_\phi(s,\xi) + \frac{1}{\sigma}R_\phi(s,\xi)\right),
\end{equation}
where
\begin{equation}\label{eq:hessian_quantities}
\begin{aligned}
H_\theta(s,a)&\coloneqq\nabla_\phi^2 \LCLIP(s,a),\\W_\theta(s,a)&\coloneqq\indicator{\mathcal U_\theta}(s,a)r_\theta(a|s)\hat A(s,a),\\
    G_\phi(s,\xi)&\coloneqq J_{\mu_\phi}(s)^\top(\xi\xi^\top-I)J_{\mu_\phi}(s),\\
    R_\phi(s,\xi)&\coloneqq\sum_{k=1}^n\xi^{(k)}\nabla^2_\phi\mu_\phi^{(k)}(s).
\end{aligned}
\end{equation}
Since $\xi$ is a bijection on $\cS\times\cA$, it is possible to compute the expectation interchangeably with respect to $\xi$ and $a$ by means of the pushforward measure it induces on $\nu^T_{\pi_\theta}$.
Following the above decomposition we obtain the following theorem.
\begin{thm}\label{thm:lambda_max_bound_squared}
Assume that there exist constants $\chi_\theta, C_\theta>0$ such that
\begin{equation}\label{eq:chi_C_theta}\begin{aligned}
    &\E_{s,a}
    \left[W_\theta^2\|G_\phi\|_{\mathrm{op}}^2\right]
    \geq\chi_\theta \E_{s,\xi}\left[\|G_\phi\|_{\mathrm{op}}^2\right]\\
    &\E_{s,a}\left[W_\theta^2\|R_\phi\|_{\mathrm{op}}^2\right]
    \leq C_\theta.
    \end{aligned}
\end{equation}
Then,
\begin{equation}\label{eq:hessian_sharpness_square_bound}
    \begin{aligned}
    &\E_{s,a}
    \left[
        \rho
        \left(
            H_\theta^2
        \right)
    \right]
    \geq\chi_\theta
    \frac{\sigma_{old}^4}{\sigma^8}
    \E_s
    \left[
        \rho
        \left(
            J_{\mu_\phi}J_{\mu_\phi}^\top
        \right)^2
    \right]
    -
    \frac{C_\theta}{\sigma^2},
    \end{aligned}
\end{equation}
where $\rho(\cdot) = \lambda_{\max}(\cdot)$ denotes the spectral radius, or, equivalently, the largest eigenvalue of the square matrix in its argument. 
\end{thm}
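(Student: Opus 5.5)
The plan is a pointwise lower bound on $\rho(H_\theta^2)$, followed by a Gaussian moment computation over $\xi$. Since $H_\theta(s,a)$ is symmetric, $\rho(H_\theta^2)=\|H_\theta\|_{\mathrm{op}}^2$. From the decomposition \eqref{eq:hessian_implicit} and the reverse triangle inequality,
\[
\|H_\theta\|_{\mathrm{op}} \geq |W_\theta|\Bigl(\tfrac{1}{\sigma^2}\|G_\phi\|_{\mathrm{op}} - \tfrac{1}{\sigma}\|R_\phi\|_{\mathrm{op}}\Bigr).
\]
I will use the elementary inequality $x^2 \geq \tfrac12 y^2 - z^2$ whenever $x\ge |y-z|$, which follows from $(y-z)^2 \geq \tfrac12 y^2 - z^2$, i.e.\ $(y/\sqrt2 - \sqrt2 z)^2\ge 0$. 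With $y=|W_\theta|\|G_\phi\|_{\mathrm{op}}/\sigma^2$ and $z=|W_\theta|\|R_\phi\|_{\mathrm{op}}/\sigma$ this gives
\[
\rho(H_\theta^2)\geq \frac{1}{2\sigma^4}W_\theta^2\|G_\phi\|_{\mathrm{op}}^2-\frac{1}{\sigma^2}W_\theta^2\|R_\phi\|_{\mathrm{op}}^2 .
\]
Taking $\E_{s,a}$ and applying the two assumptions \eqref{eq:chi_C_theta} yields
\[
\E_{s,a}[\rho(H_\theta^2)]\ge \frac{\chi_\theta}{2\sigma^4}\E_{s,\xi}\bigl[\|G_\phi\|_{\mathrm{op}}^2\bigr]-\frac{C_\theta}{\sigma^2}.
\]
Here the expectation over $a$ is rewritten over $\xi$ through the pushforward, as noted before the theorem.

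It then remains to prove, for each fixed $s$,
\[
\E_{\xi\mid s}\bigl[\|G_\phi(s,\xi)\|_{\mathrm{op}}^2\bigr]\geq 2\,\frac{\sigma_{old}^4}{\sigma^4}\,\rho(J_{\mu_\phi}J_{\mu_\phi}^\top)^2 ,
\]
because the factor $2$ cancels the $\tfrac12$ and produces exactly \eqref{eq:hessian_sharpness_square_bound}. To show this, let $\lambda=\rho(J J^\top)$ with unit eigenvector $v\in\R^n$, where $J=J_{\mu_\phi}(s)$, and assume $\lambda>0$ (otherwise the bound is trivial). Test $G_\phi$ against the unit vector $u=J^\top v/\sqrt{\lambda}$. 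Then $Ju = JJ^\top v/\sqrt\lambda=\sqrt{\lambda}\,v$, so
\[
u^\top G_\phi u = \lambda\bigl((\xi^\top v)^2-1\bigr),\qquad\text{hence}\qquad \|G_\phi\|_{\mathrm{op}}^2\ge \lambda^2\bigl((\xi^\top v)^2-1\bigr)^2 .
\]

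Next I identify the law of $\xi$ given $s$. Since $a\sim\cN(\mu_{\phi_{old}}(s),\sigma_{old}^2 I)$, we have $\xi\sim\cN\bigl(m,\tau^2 I\bigr)$ with $m=(\mu_{\phi_{old}}(s)-\mu_\phi(s))/\sigma$ and $\tau=\sigma_{old}/\sigma$. Thus $Z\coloneqq\xi^\top v\sim\cN(m^\top v,\tau^2)$. Then $\E[(Z^2-1)^2]\ge \mathrm{Var}(Z^2)$, since the second moment dominates the variance. For a Gaussian, $\mathrm{Var}(Z^2)=2\tau^4+4(m^\top v)^2\tau^2\ge 2\tau^4$. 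This gives the claimed conditional bound, and averaging over $s$ gives $\E_s[\rho(JJ^\top)^2]$.

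The step I expect to be most delicate is the measure bookkeeping rather than the algebra. The state marginal of $\nu^T_{\pi_{\theta_{old}}}$ must be the measure over which $\E_s$ is taken. The conditional law of $a$ given $s$ must be the Gaussian $\pi_{\theta_{old}}(\cdot|s)$, which holds because $\nu^T$ is a time-average of $\rho$-weighted policy draws. One also needs the indicator in $W_\theta$ to cause no trouble; this is handled by the assumption that $\unclippedbrd$ is null, so the decomposition holds almost everywhere, and the rest is absorbed into $\chi_\theta$. Finally, if the paper's version avoids the factor $\tfrac12$ differently, the constant could instead be absorbed into $\chi_\theta$; the Gaussian variance computation, however, supplies exactly the needed factor $2$.
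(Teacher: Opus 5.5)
Your proposal is correct and follows essentially the same route as the paper: the inequality $\|A+B\|^2\ge\frac12\|A\|^2-\|B\|^2$ together with the two assumptions, then a Rayleigh-quotient test against $J_{\mu_\phi}^\top v/\sqrt{\lambda}$, and finally the Gaussian bound $\E[(Z^2-1)^2]\ge 2\tau^4$ with $\tau=\sigma_{old}/\sigma$. Your bias--variance phrasing of that last bound is the same decomposition the paper obtains by direct expansion, and your explicit handling of the case $\lambda=0$ adds a small bit of care that the paper omits.
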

\paragraph{Proof sketch.} Using the inequality $\|A+B\|^2\geq\frac12\|A\|^2-\|B\|^2$ and Assumption~\eqref{eq:chi_C_theta}, we bound $\E_{s,a}[\rho(H_\theta(s,a)^2)]$ with  $\chi_\theta\E_{s,\xi}[\|G_\phi(s,\xi)\|_{\mathrm{op}}^2]/(2\sigma^4)$. Applying the Rayleigh quotient inequality to $\|G(s,\xi)\|_{\mathrm{op}}$ using the unit eigenvector $u_1$ of $J_{\mu_\phi}(s)J_{\mu_\phi}(s)^\top$ associated with the largest eigenvalue $\lambda_1$ yields $\|G_\phi(s,\xi)\|_{\mathrm{op}}^2\geq\lambda_1^2((u_1^\top\xi)^2-1)^2$. 
By studying the (normal) distribution of $u_1^\top\xi$ we can show that $\E_\xi[(u_1^\top\xi)^2-1)^2|s]\geq2\sigma_{old}^4/\sigma^4$. The result is shown by averaging over $s$ and applying the tower rule. The complete proof is provided in Appendix~\ref{app:proof_lambda_max_thm}.

\negspace Theorem~\ref{thm:lambda_max_bound_squared} yields the following corollary.
\begin{cor}\label{cor:largest_eigenvalue_sqrt_bound}
    Under the assumptions of Theorem~\ref{thm:lambda_max_bound_squared}, if there exists a constant $ M_\theta>0$ such that
    \small{\begin{equation}\label{eq:M_theta}
    \begin{aligned}
        0 <\, &\E_{s,a}\left[\rho\left( H_\theta(s,a)^4\right)\right]
        \leq\, M_\theta
        \E_{s,a}\left[\rho\left( H_\theta(s,a)^2\right)\right]^2 <\infty,
        \end{aligned}
    \end{equation}}%
    then, it holds that
    \begin{equation}
        \E_{s,a}\left[\max_i|\lambda_i\left( H_\theta(s,a)\right)|\right]\geq \sqrt{\frac{L_\theta}{2}}\frac{1}{4M_\theta},
    \end{equation}
    where $\lambda_i(\cdot)$ denotes the $i$-th largest eigenvalue and
    \begin{equation}
         L_\theta \coloneqq \left[\chi_\theta\frac{\sigma_{old}^4}{\sigma^8}\E_s\left[\rho\left(J_{\mu_\phi}(s)J_{\mu_\phi}(s)^\top\right)^2\right]
    - \frac{C_\theta}{\sigma^2}\right]^+,
    \end{equation}
    with $[\cdot]^+$ denoting the positive part.
\end{cor}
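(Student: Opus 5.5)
Set $X\coloneqq \max_i|\lambda_i(H_\theta(s,a))|$, viewed as a nonnegative random variable under $(s,a)\sim\nu^T_{\pi_{\theta_{old}}}$. Since $H_\theta$ is symmetric, $\rho(H_\theta^2)=X^2$ and $\rho(H_\theta^4)=X^4$. The statement is then about lower-bounding $\E[X]$ using information on $\E[X^2]$ and $\E[X^4]$, which is exactly the situation of the kurtosis-type step in the proof of Theorem~\ref{thm:minibatch_gradient_norm}. I would reuse that argument with $\|\bar g_\theta(B)\|$ replaced by $X$. Theorem~\ref{thm:lambda_max_bound_squared} gives $\E[X^2]\geq \chi_\theta\frac{\sigma_{old}^4}{\sigma^8}\E_s[\rho(J_{\mu_\phi}J_{\mu_\phi}^\top)^2]-\frac{C_\theta}{\sigma^2}$. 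Since $\E[X^2]>0$ by \eqref{eq:M_theta}, we also have $\E[X^2]\geq L_\theta$; taking the positive part is what makes this hold even when the bracket is negative.

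The key step is a Paley--Zygmund / Hölder argument. By Cauchy--Schwarz,
\begin{equation*}
\E[X^2]=\E[X^{1/2}X^{3/2}]\leq \E[X]^{1/2}\,\E[X^3]^{1/2},
\end{equation*}
and again by Cauchy--Schwarz, $\E[X^3]\leq \E[X^2]^{1/2}\E[X^4]^{1/2}\leq \sqrt{M_\theta}\,\E[X^2]^{3/2}$. Combining the two gives $\E[X]\geq \E[X^2]^{1/2}/\sqrt{M_\theta}$.

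Alternatively, to reproduce the stated constant, I would follow the truncation route. Split $\E[X^2]=\E[X^2\indicator{X\le t}]+\E[X^2\indicator{X>t}]$. The first piece is at most $t\,\E[X]$. The second is at most $\E[X^4]/t^2\leq M_\theta\E[X^2]^2/t^2$ by Markov's inequality applied to $X^4$. Choosing $t=2M_\theta\E[X^2]^{1/2}$, or a similar multiple, forces the tail piece to be at most $\E[X^2]/4$. This yields $\E[X]\geq c\,\E[X^2]^{1/2}/M_\theta$ with $c$ an explicit constant such as $\frac{1}{4\sqrt2}$. Since $M_\theta\geq1$ by Jensen, either form gives the claimed bound $\sqrt{L_\theta/2}\,\frac{1}{4M_\theta}$, because $\frac{1}{\sqrt{M_\theta}}\geq\frac{1}{M_\theta}$. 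Finally, substitute $\E[X^2]\geq L_\theta$ and use monotonicity of the square root.

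The main obstacle is bookkeeping rather than any conceptual difficulty. First, one has to verify that $\rho(H^2)=(\max_i|\lambda_i(H)|)^2$; this needs $H$ symmetric, which holds almost everywhere thanks to the null-probability assumption on $\unclippedbrd$. Second, one has to match the exact constant $\frac{1}{4M_\theta\sqrt2}$, which I expect to come out of the truncation choice with some slack. Any sharper constant obtained still implies the stated inequality.
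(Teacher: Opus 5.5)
Your proof is correct. Its skeleton is the same as the paper's: set $Y=\rho(H_\theta^2)=X^2$, use Theorem~\ref{thm:lambda_max_bound_squared} together with $Y\ge 0$ to get $\E[Y]\ge L_\theta$, and read \eqref{eq:M_theta} as $\E[Y^2]\le M_\theta\E[Y]^2$ so that the square root can be taken through the expectation. The difference lies in the inequality used for that last step.

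The paper applies its Lemma~\ref{lemma:kurtosis} to $Y$ with $K=M_\theta$ and $C=L_\theta$. Paley--Zygmund at $\eta=\tfrac12$ gives $\P(Y\ge\tfrac12\E[Y])\ge\tfrac{1}{4M_\theta}$. Then $\E[\sqrt Y]$ is at least $\sqrt{\E[Y]/2}$ times that probability, which is exactly where the constant $\sqrt{1/2}\cdot\tfrac{1}{4M_\theta}$ comes from.

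Your main route instead interpolates moments (Cauchy--Schwarz twice, equivalently $\|X\|_2\le\|X\|_1^{1/3}\|X\|_4^{2/3}$). This gives $\E[X]\ge\sqrt{\E[X^2]/M_\theta}$, which is sharper than the stated bound by a factor $4\sqrt{2M_\theta}$. You correctly note that recovering the stated form uses $M_\theta\ge 1$, which follows from $\E[X^4]\ge\E[X^2]^2$.

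Your truncation variant is closer in spirit to Paley--Zygmund and also works. With $t=2M_\theta\E[X^2]^{1/2}$ it gives $\E[X]\ge\tfrac{3}{8M_\theta}\E[X^2]^{1/2}$. The tail estimate there is really the pointwise bound $X^2\indicator{X>t}\le X^4/t^2$ rather than Markov's inequality.

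The paper's route buys reuse: one lemma serves both Theorem~\ref{thm:minibatch_gradient_norm} and this corollary. It also yields a by-product your argument does not, namely a lower bound on the probability that the squared sharpness is at least half its mean. Your route buys a shorter, event-free argument and a better constant.
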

\paragraph{Proof sketch.} Assumption~\eqref{eq:M_theta} is a kurtosis-like assumption ensuring that it is possible to take the square root inside the argument of $\E[\cdot]$ and on the RHS of~\eqref{eq:hessian_sharpness_square_bound}, provided that everything is positive. It is a direct application of Lemma~\ref{lemma:kurtosis}, found in Appendix~\ref{app:sec_lemma_kurtosis}.

Corollary~\ref{cor:largest_eigenvalue_sqrt_bound} shows that if $\sigma\approx\sigma_{old}$, and $C_\theta=o(\sigma^{-2}),\ \liminf_{\sigma\to0}\chi_\theta>0$, then the Hessian sharpness is expected to scale as $\sigma^{-2}$. This creates problems with optimization and is a source of asymptotic collapse in stochastic gradient descent, if the learning rate exceeds $2/\E_{s,a}\left[\max_i|\lambda_i( H_\theta(s,a))|\right]$~\citep{cohen2021gradient}.\negspace
\subsection{Collapse of the Hessian approximate rank}\negspace
While the previous result is important in defining a suitable learning rate for stable learning dynamics and can be used to explain the asymptotic collapse on PPO~\citep{Dohare2024},~\cite{lewandowski2024curvature} has shown that the sharpness itself is not the sole factor affecting LoP, and instead proposes using the effective rank $\mathrm{srank}^{(\delta)}(\nabla^2\cL)\coloneqq\argmin_k \left\{
    \frac{\sum_{i=1}^k\sigma_i(\nabla^2\cL)}{\sum_{i=1}^d\sigma_i(\nabla^2\cL)}>1-\delta
    \right\}$
for some value $0<\delta<1$, where $\sigma_i(\cdot)$ denotes the $i$-th singular value. 
The purpose of the following theorem is to show that in PPO the collapse of $\sigma\to0$ causes the reduction of the rank of the empirical matrix to, at most, $n\times N$, where $n$ and $N$ respectively denote the action dimensionality the batch size.
\negspace\begin{thm}\label{thm:hessian_effective_rank_collapse}
    Let $B=\{(s_i, a_i)\}_{i=1}^N$ be a batch of $N$ state-action pairs and consider the decomposition in~\eqref{eq:hessian_implicit}.
    If 
    $
        M_1\coloneqq \sum_{i=1}^N W_\theta(s_i,a_i)G_\phi(s_i,\xi_i)\neq 0,
    $
then the matrix
\begin{equation}
    \bar H_\theta(B) \coloneqq  \,\nabla^2_\phi\left(\frac1N\sum_{i=1}^N\LCLIP(s_i,a_i)\right)
\end{equation}
has at least one and at most $n\times N$ eigenvalues that scale as $1/\sigma^2$, where $n$ is the action dimensionality.\\
Furthermore, for every $0<\delta<1$,
\begin{equation}\label{}
    \limsup_{\sigma\to0}\mathrm{srank}^{(\delta)}(\bar H_\theta(B))\leq \rk{M_1}\leq nN.
\end{equation}
\end{thm}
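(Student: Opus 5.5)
Summing the decomposition~\eqref{eq:hessian_implicit} over the batch gives
\[
\bar H_\theta(B)=\frac{1}{N}\Bigl(\frac{1}{\sigma^2}M_1+\frac1\sigma M_2\Bigr),\qquad M_2\coloneqq\sum_{i=1}^N W_\theta(s_i,a_i)R_\phi(s_i,\xi_i),
\]
so that $\sigma^2 N\,\bar H_\theta(B)=M_1+\sigma M_2$. The plan is to treat $\sigma M_2$ as a perturbation of $M_1$ and to regard $\xi_i$, $W_\theta$ and $\phi$ as held fixed while $\sigma\to0$, i.e.\ along a fixed batch in $\xi$-coordinates. This is the natural reading given that $\xi$ is a bijection. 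The first step is a rank bound on $M_1$. Each $G_\phi(s_i,\xi_i)=J_{\mu_\phi}(s_i)^\top(\xi_i\xi_i^\top-I)J_{\mu_\phi}(s_i)$ has column space contained in that of $J_{\mu_\phi}(s_i)^\top$, whose dimension is at most $n$. Subadditivity of rank then gives $\rk{M_1}\le nN$. Since $M_1\ne0$ by assumption, $1\le r\coloneqq\rk{M_1}\le nN$.

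The second step is the eigenvalue scaling. All the matrices are symmetric, so Weyl's inequality gives $|\lambda_j(M_1+\sigma M_2)-\lambda_j(M_1)|\le\sigma\|M_2\|_{\mathrm{op}}$ for every $j$. As $\sigma\to0$, the $r$ nonzero eigenvalues of $M_1$ therefore stay bounded away from zero. The corresponding eigenvalues of $\bar H_\theta(B)$ equal $(\lambda_j(M_1)+O(\sigma))/(N\sigma^2)$, so they scale as $1/\sigma^2$, and there are at least one and at most $nN$ of them. The remaining $d-r$ eigenvalues of $\bar H_\theta(B)$ are bounded by $\|M_2\|_{\mathrm{op}}/(N\sigma)=O(1/\sigma)$, so they do not scale as $1/\sigma^2$.

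The third step is the effective rank. Singular values of a symmetric matrix are the absolute eigenvalues, and $\mathrm{srank}^{(\delta)}$ is invariant under positive rescaling, so we may work with $\sigma^2N\bar H_\theta(B)=M_1+\sigma M_2\to M_1$. Singular values are continuous, which gives convergence of the normalized partial sums
\[
\frac{\sum_{i\le k}\sigma_i(M_1+\sigma M_2)}{\sum_{i\le d}\sigma_i(M_1+\sigma M_2)}\longrightarrow\frac{\sum_{i\le k}\sigma_i(M_1)}{\sum_{i\le d}\sigma_i(M_1)}.
\]
The denominator is positive in the limit because $M_1\ne0$. At $k=r$ the limit ratio equals $1>1-\delta$. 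So for all sufficiently small $\sigma$ the ratio at $k=r$ exceeds $1-\delta$, which gives $\mathrm{srank}^{(\delta)}(\bar H_\theta(B))\le r$ eventually. Hence $\limsup_{\sigma\to0}\mathrm{srank}^{(\delta)}\le\rk{M_1}\le nN$.

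The main obstacle is justifying that $M_1$ and $M_2$ remain fixed, or at least bounded with $M_1$ converging, as $\sigma\to0$. The quantities $W_\theta$ (through $r_\theta$ and the unclipped indicator) and the $\xi_i$ both depend on $\sigma$. I would first state the convention explicitly, that the limit is taken with the batch expressed in $(s,\xi)$ coordinates and the other factors held fixed, exactly as in the decomposition~\eqref{eq:hessian_implicit}. Under that convention the perturbation argument above goes through. If more generality is wanted, the fallback is to assume $M_2$ stays bounded and $M_1$ converges to a nonzero limit, which the same Weyl and continuity arguments handle.
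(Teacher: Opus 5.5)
Your proposal is correct and follows essentially the same route as the paper's proof. The shared steps are the split $\bar H_\theta(B)=\frac{1}{N\sigma^2}(M_1+\sigma M_2)$, the rank subadditivity bound $\rk{M_1}\le nN$, continuity of eigenvalues and singular values as $\sigma\to0$, and the partial-sum ratio at $k=\rk{M_1}$ tending to $1$. Two points go beyond the paper. Using Weyl's inequality sharpens the paper's qualitative continuity step, giving $O(1/\sigma)$ rather than $o(1/\sigma^2)$ for the remaining eigenvalues. Your explicit convention of holding $\xi_i$ and $W_\theta$ fixed surfaces an assumption the paper makes silently when it writes $M_1+\sigma M_2\to M_1$.
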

\paragraph{Proof sketch} The matrix $\bar H_\theta$ can be decomposed into two matrices $M_1$ and $M_2$ so that $\bar H_\theta(B) = \frac{1}{N\sigma^2}(M_1+\sigma M_2)$, implying that one of them scales with $1/\sigma^2$ while the other scales with $1/\sigma$.
By the continuity of the eigenvalues, this also applies to the eigenvalues of $\bar H_\theta$, 
and since $M_1$ has at most $n\times N$ non-zero eigenvalues, by the definition of $G_\phi$ we get that at most $n\times N$ eigenvalues of $\bar H $ scale with $1/\sigma^2$.
The same relationship can be applied to the singular values, which are used in the $\mathrm{srank}$ definition. 
Since the sum of all the singular values converges to the sum of the singular values of $M_1$ as $\sigma\to0$, we obtain the desired result.
A complete proof is provided in Appendix~\ref{app:proof_hess_eff_rank}.\negspace
\subsection{Results interpretation and consequences}\negspace
Given the relationship between step size and sharpness, Theorem~\ref{thm:minibatch_gradient_norm} and Corollary~\ref{cor:largest_eigenvalue_sqrt_bound} would suggest the following heuristic scaling of the learning rate:
$
    \eta_t = \eta_0\frac{\sigma^2}{\|\bar g(B)\|},
$
with $\bar g(B)$ given by~\eqref{eq:bar_g_theta} for SGD, while the normalization by $\|\bar g(B)\|$ would not be needed in Adam or RMSProp, as they naturally provide scale invariance. 
Interestingly, this is the same scaling proposed in Safe Policy Gradients~\citep{papini2020learningspeedstability}. 
However, our results formally motivate scaling by $\|\bar g(B)\|$, which was originally proposed simply as a heuristic. We also extend the results to the complex loss structure of PPO. An additional degree of novelty in our derivation is the use of a  fundamentally different approach, based on geometric attributes of the PPO loss landscape.

\negspace
Moreover, since Theorem~\ref{thm:minibatch_gradient_norm} implies that the gradient norm diverges as $\sigma\to0$, this motivates why setting $\beta_1=\beta_2$ in Adam, instead of the conventional values of $\beta_1=0.99$ and $\beta_2=0.999$, reduces the step-size~\citep{Dohare2024}. 
In fact, the ratio between $\beta_1$ and $\beta_2$ also determines how much weight is given to the latest gradient compared to the previous ones. 
Since the gradient norm diverges, this implies that increasing $\beta_1/\beta_2$ makes the update larger in magnitude. Corollary~\ref{cor:largest_eigenvalue_sqrt_bound} complements these considerations, motivating why reducing the step size should mitigate, but not eliminate, the asymptotic collapse. 
In fact, smaller steps are preferable under increasing sharpness conditions. 
However, step-size reduction is not directly related to the sharpness itself and does not prevent collapse. 
Moreover, our results explain the success of methods that normalize updates such as Adam or RMSProp. 
Indeed, these approaches avoid effective step size blow-up during optimization, provided that $\sigma$ collapses smoothly enough.
Finally, our work motivates the findings of~\cite{andrychowicz2021what} that, when using RMSProp and changing the momentum coefficient from 0 to 0.9, the learning rate needs to be reduced. 
If the gradient norms were to go to 0, which normally occurs during optimization, a learning rate increase should instead have been necessary. 
In conclusion, our results provide insights into the learning dynamics of PPO, formally explaining previous heuristics.
\begin{rmrk}
    The assumptions used in Theorems~\ref{thm:minibatch_gradient_norm} and~\ref{thm:minibatch_gradient_norm} are relatively weak, as they only require integrability. In fact, if the expectations in~\eqref{eq:K_theta},~\eqref{eq:k_theta}, ~\eqref{eq:chi_C_theta} and~\eqref{eq:M_theta} exist and are finite, then all the constants exist and the hypotheses are well-posed.
\end{rmrk}\negspace
\begin{rmrk}
    All the results are taken with respect to $\phi$, but since $\nabla^2_\phi[\cdot]$ is a principal submatrix of $\nabla^2_\theta[\cdot]$, which is symmetric, and $\nabla_\phi[\cdot]$ is a subvector of  $\nabla_\theta[\cdot]$ our results also apply to the gradient and sharpness taken with respect to $\theta$.
\end{rmrk}\negspace
\section{EXPERIMENTS}\negspace
In this section, we validate the hypotheses and consequences of Theorems~\ref{thm:minibatch_gradient_norm} and~\ref{thm:lambda_max_bound_squared}. We also perform numerical simulations on different RL benchmarks and real-world deployment of the trained policies on a real Franka Emika Panda robot to further support our results.
Experiments in simulation are performed on a single NVIDIA L40S GPU.
Full implementation details are included in Appendix~\ref{app:implementation_details}.
\negspace
\subsection{Theoretical validation}\label{sec:theoretical_validation}\negspace
\begin{figure*}[t]
    \centering    \includegraphics[width=\linewidth]{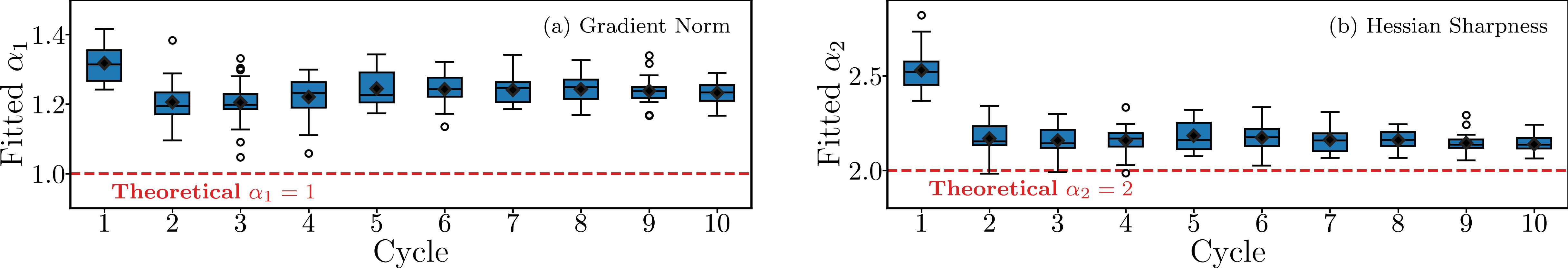}

    \caption{Results of fitting $\alpha_1$ and $\alpha_2$ for the gradient norm and Hessian sharpness.}
    \label{fig:exponents}
\end{figure*}
\begin{figure*}[t]
    \centering
\includegraphics[width=\linewidth]{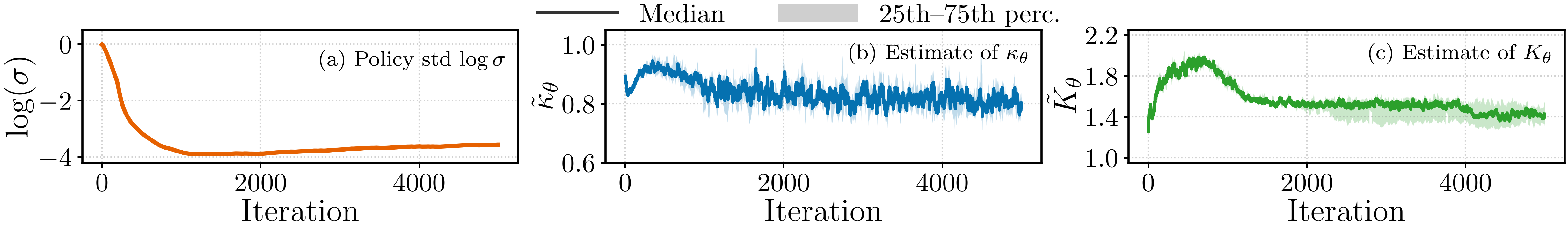}
    \caption{Empirical measurements of $\tilde \kappa_\theta$ and $\tilde K_\theta$, smoothed over a window of 30 iterations.}
    \label{fig:K_theta}
\end{figure*}
The goals of this validation are to ensure that the hypotheses laid out in  Section~\ref{sec:optimization_landscape} are well-posed and, furthermore, to verify the consequences of the theorems.
\paragraph{Experimental setup} 
As there is no method known to the authors to efficiently estimate $\chi_\theta$ and $C_\theta$ in~\eqref{eq:chi_C_theta}, we only compute estimates for the constants of Theorem~\ref{thm:minibatch_gradient_norm}.
To do so, we train policies for the standard \code{Isaac-Reach-Franka-v0} IsaacLab\footnote{\url{https://github.com/isaac-sim/IsaacLab}.} environment, using the RSL-RL\footnote{\url{https://github.com/leggedrobotics/rsl_rl}.} implementation of PPO. In this task, the policy controls a Franka Emika Panda manipulator robot, and its goal is to move the end-effector to a target position and orientation.

Let $p$ denote the number of batches per iteration and $N$ be the size of each batch $B_j$, $j\in{1,\ldots,p}$. Let
$
    \tilde \kappa_\theta^{\textrm{(batch)}}(B_j) \coloneqq \frac{\frac1N\sum_{i=1}^m \nabla_\phi \LCLIP(s_i,a_i) }{\frac1N\sum_{i=1}^m(\nabla_\phi\log\pi_\theta(s_i,a_i))^2}
$
for each batch. Then, at timestep $t$, $\kappa_\theta$ is estimated by $
    \tilde \kappa_\theta^{(t)} = \frac1p\sum_{j=1}^p \tilde \kappa_\theta^{\textrm{(batch)}}(B_j).
$
Analogously, $K_\theta$ is estimated as
$
        \tilde K_\theta^{(t)} \coloneqq \frac{\frac1p\sum_{j=1}^p\bar g_\theta(B_j)^4}{(\frac1p\sum_{j=1}^p\bar g_\theta(B_j)^2)^2}.
$
Then, to validate the impact of $\sigma$ over the gradient norm and the largest Hessian eigenvalue, we reset $\sigma$ every 500 iterations to its initial value and report how it affects these quantities. This periodic resetting allows us to establish a causal relationship between $\sigma$ and these metrics, since it changes quickly compared to the other values. The analysis also includes a loglinear fit of each 500-iteration window according to
\begin{equation}\label{eq:log_regression_estimates}
\begin{aligned}
    \log  \tilde g_\theta^{(t)} &= -\alpha_1 \log \sigma + \beta_1,\\
    \log \lambda_{\max}^{(t)}&= -\alpha_2 \log \sigma + \beta_2,
\end{aligned}
\end{equation}
where $\tilde g_\theta^{(t)}$ and $\lambda_{\max}^{(t)}$ are the average of the gradient norm and the Hessian spectral norm across the gradient descent batches performed at iteration $t$. 
We report the exponents $\alpha_1$ and $\alpha_2$, to compare them with the theoretical bounds. Theorems~\ref{thm:minibatch_gradient_norm} and~\ref{thm:lambda_max_bound_squared} imply that $\alpha_1\geq1$ and $\alpha_2\geq2$.
The training is performed with 4096 parallel environments, for 5000 training iterations, and the results are averaged over 30 seeds. One iteration consists of 24 timesteps, yielding $24\times4096=98304$ timesteps per iteration.
The agents are trained using the Adam optimizer with 8 epochs of 4 minibatches each, yielding $p=32$ gradient descent steps per iteration. 
\negspace
\paragraph{Results}
The empirical estimates of $\kappa_\theta$ and $K_\theta$ are shown in Figure~\ref{fig:K_theta}.
Consistently with the theoretical assumptions, they neither diverge nor collapse to 0 during training.
Moreover, training \code{Isaac-Reach-Franka-v0} with $\sigma$ resets every 500 iterations and fitting the log-linear model from~\eqref{eq:log_regression_estimates} over each of these 500-iteration cycles results in the $\alpha_1$ and $\alpha_2$ estimates shown in Figure~\ref{fig:exponents}. These findings are consistent with our theoretical results, as we get $\alpha_1$ slightly above 1 and $\alpha_2$ hovering around 2. The mean of the $\text{R}^2$ of $\alpha_1$ is 0.6409, while for $\alpha_2$ it is 0.8728.
More results are shown in Appendix~\ref{app:additional results}.

\negspace
\subsection{The unreported role of $\sigma$ in a common benchmark}\negspace
\begin{figure*}[t]
\begin{subfigure}[t]{0.49\textwidth}
        \centering
\includegraphics[width=\linewidth]{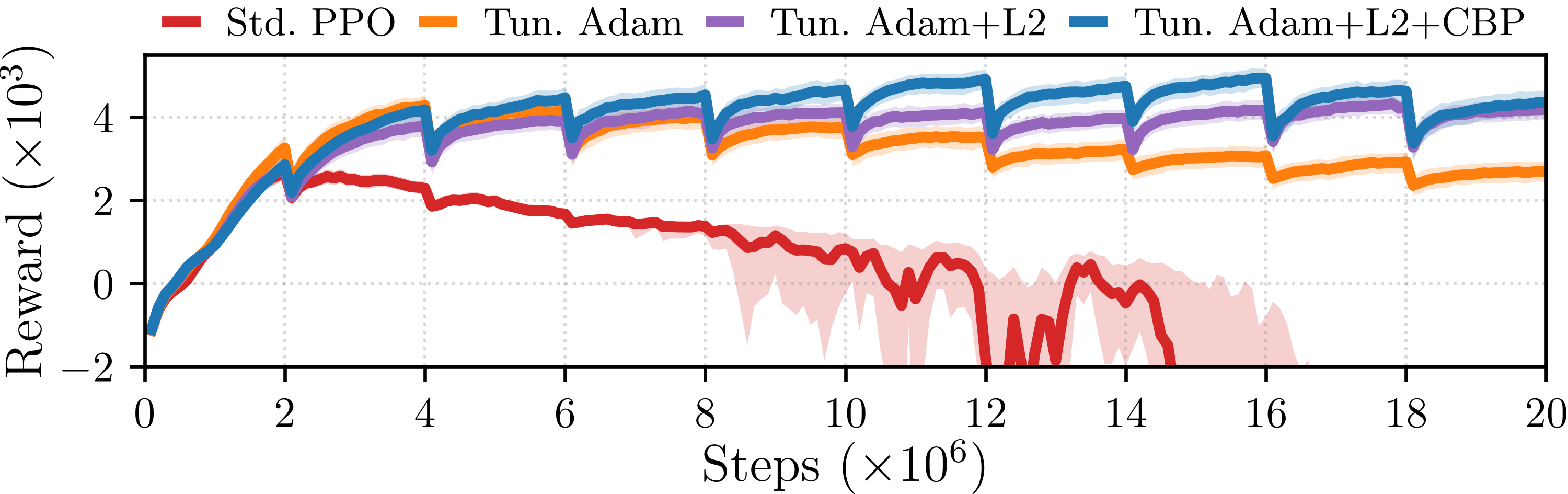}
        \caption{}\label{fig:rewards_fig3}
\end{subfigure}
\begin{subfigure}[t]{0.49\textwidth}
        \centering
\includegraphics[width=\linewidth]{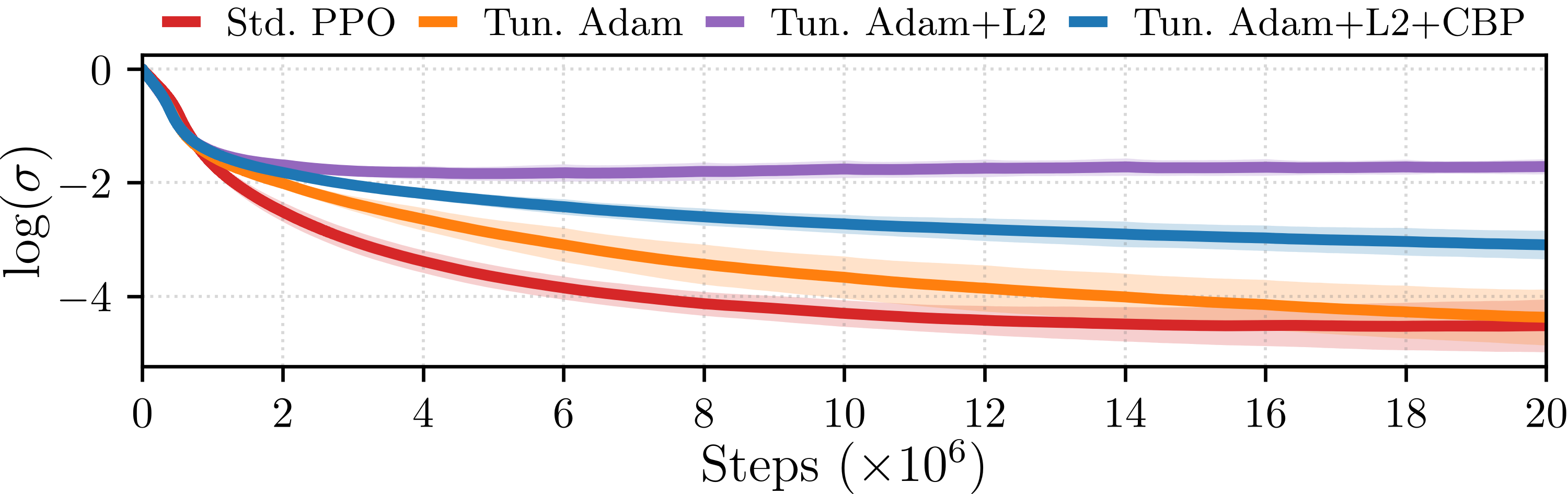}
        \caption{}\label{fig:sigma_sa_fig3}
\end{subfigure}
    \caption{\textbf{(a):} Reward results in Slippery Ant and \textbf{(b):} evolution of $\log\sigma$ across the different methods. Reward collapse is associated with $\sigma$'s collapse and the studied techniques strongly affect its evolution.}
    \label{fig:sigma_sa}
\end{figure*}
\begin{figure*}[t]
    \centering
    \begin{subfigure}[t]{0.49\textwidth}
        \centering
\includegraphics[width=\linewidth]{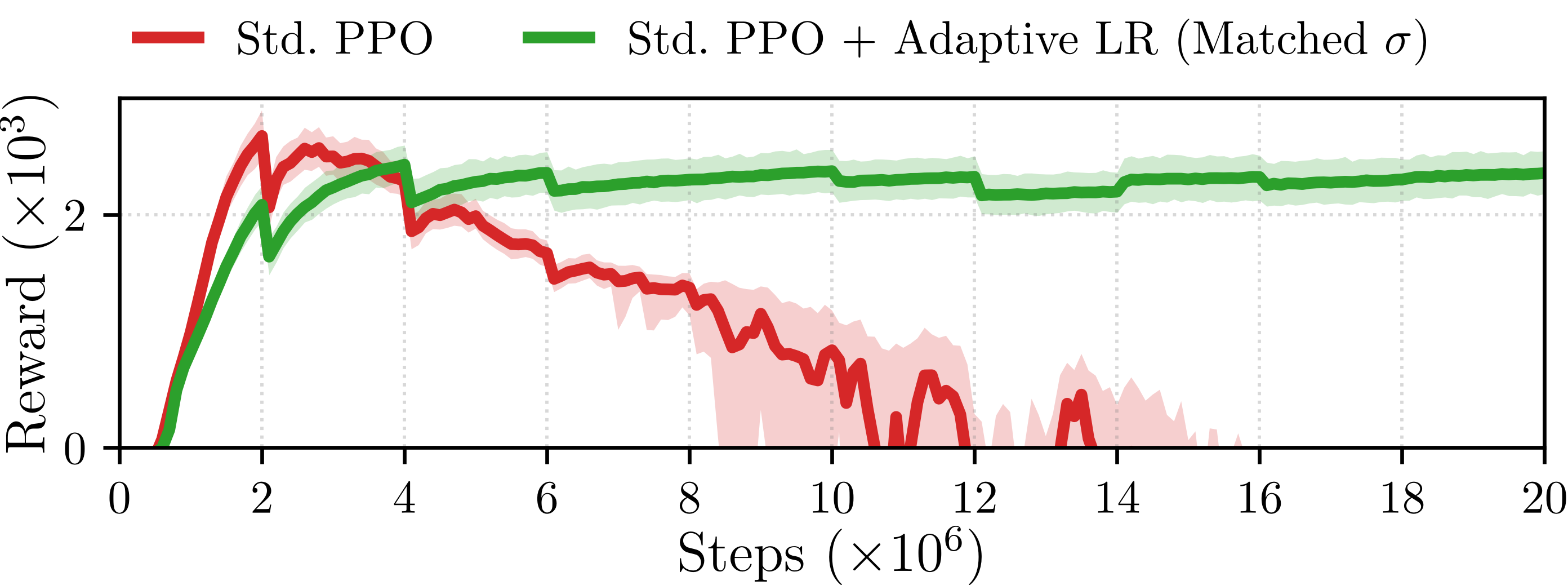}
        \caption{}
        \label{fig:adaptive_lr_sa}
    \end{subfigure}
    \hfill
    \begin{subfigure}[t]{0.49\textwidth}
        \centering
\includegraphics[width=\linewidth]{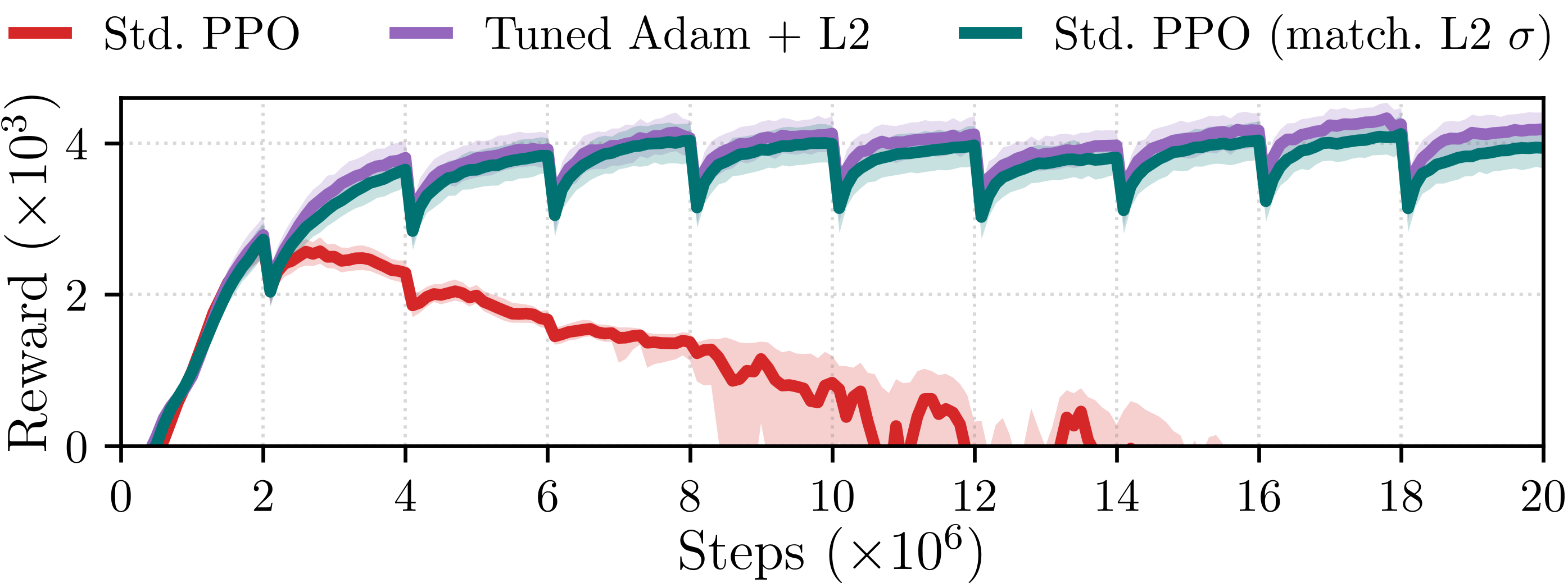}
        \caption{}
        \label{fig:std_l2_sa}
    \end{subfigure}
    \caption{Training progress in Slippery Ant. \textbf{(a):} Reward of Standard PPO and Standard PPO with adaptive learning rate. \textbf{(b):} Reward of Standard PPO with the same evolution of $\sigma$ as Tuned Adam + L2.}
    \label{fig:combined_slippery_ant_all}
\end{figure*}

In this experiment, we explore whether $\sigma$ plays a role in the effectiveness of state-of-the-art LoP mitigation techniques, and if it can explain asymptotic PPO collapse. We do so using the Slippery Ant environment, which was popularized by~\cite{dohare2022continual, Dohare2024} and is generally acknowledged as a benchmark for LoP in PPO~\citep{lazar2026revisiting,akgul2025overcoming,klein2026plasticitylossdeepreinforcement}. 
It is based on the MuJoCo Ant environment\footnote{\url{https://github.com/google-deepmind/mujoco}}, with the twist that every 2M timesteps the friction coefficient between the robot and the ground is randomized within the range [0.02, 2.00], so that the environment is non-stationary. 
The main result reported by \cite{Dohare2024} is that regular PPO collapses in this setting, whereas tuning Adam such that $\beta_1=\beta_2=0.99$ delays the collapse,
and adding L2 regularization prevents it.
However, it underperforms the Continual Backpropagation (CBP) technique~\cite{Dohare2024}, in which a low percentage of weights with low utilization scores are randomly reinitialized. 
\paragraph{Experimental setup.} We build upon \cite{Dohare2024} by comparing our results in the same setting\footnote{Based on the official repository \url{https://github.com/shibhansh/loss-of-plasticity}.}. To emphasize the hidden and unreported effect of $\sigma$ across the different baselines, we first replicate the results while recording the evolution of $\sigma$ across 100 seeds. Given that this environment does not employ isotropic $\sigma$, but rather a diagonal matrix with $n=8$ entries, the value we report as $\sigma$ is the average standard deviation $\frac 1n\sum_{i=1}^n\sigma_i$,
where $\sigma_i$ is the $i$-th entry of the diagonal variance-covariance matrix.

In the original paper, the comparison between the different methods was performed on the stochastic policies. As we found significantly different levels of $\sigma$, we compare the performance of the deterministic final policy obtained with the two most promising methods, L2 and L2+CBP, to distinguish between the effectiveness of each. 
This is motivated by the fact that RL policies are often deployed deterministically in the target environment as standard assumptions ensure the existence of an optimal deterministic policy (cfr.~\cite{hernandezlerma1996discretetime}, Theorem 3.2.1).
We then conduct two more experiments, to further validate the centrality of the standard deviation in the context of LoP:
\begin{enumerate}[noitemsep,leftmargin=*,topsep=0pt]
    \item Schedule $\sigma$ to evolve as in Standard PPO, so that it is not learned. Then, progressively anneal the learning rate with $\eta_t = \eta_0\sigma_t^2$. This test is performed primarily to assess whether the collapse of PPO can be avoided.\label{item:experiment1}
    \item Schedule $\sigma$ to evolve in the same way as the baseline trained with tuned Adam + L2 regularization, and use this schedule to train Standard PPO. This experiment aims to evaluate what portion of the improvements on LoP can be explained simply by the implicit regularization of $\sigma$ and how it evolves.\label{item:experiment2}
\end{enumerate}
\begin{figure}[t]
    \centering
\includegraphics[width=\linewidth]{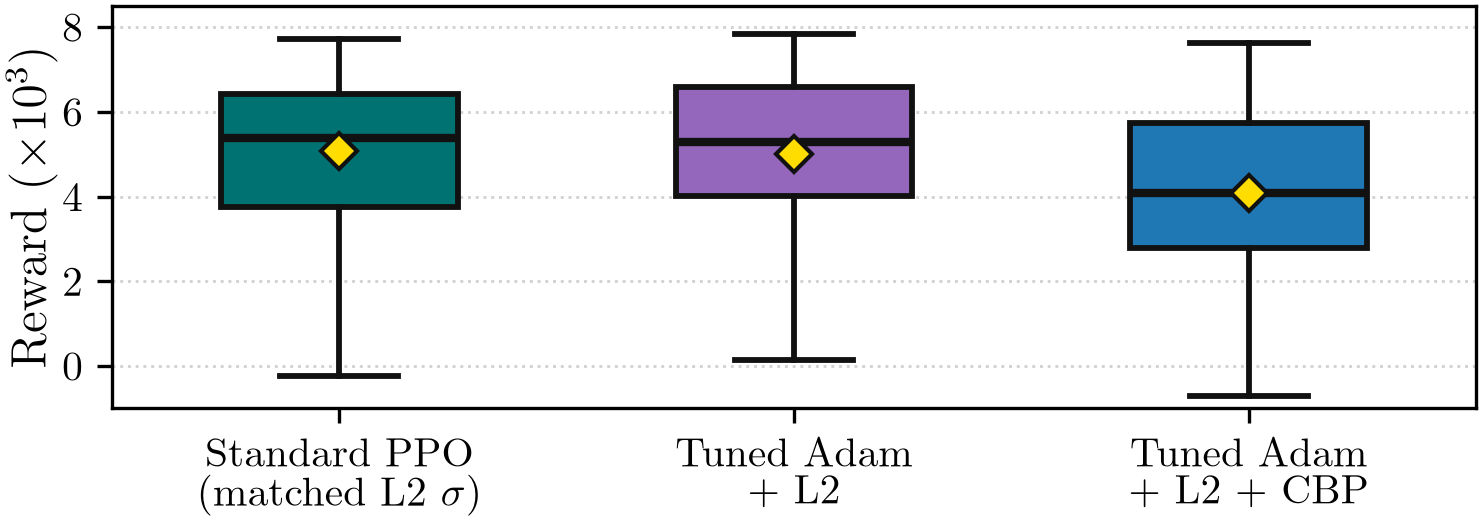}
    \caption{Reward of final deterministic policies for L2, CBP, and Standard PPO with matched $\sigma$ (100 seeds).}
    \label{fig:deterministic_box_sa}
\end{figure}\negspace
\paragraph{Results} Figure~\ref{fig:sigma_sa} demonstrates how the analyzed techniques implicitly affect $\sigma$ and how its collapse is associated with the reward collapse.
\cite{Dohare2024} shows that Standard PPO collapses after 4 million iterations,  tuning Adam is only able to delay this collapse, 
and L2 and L2+CBP show no collapse. We include these results in Figure~\ref{fig:rewards_fig3}. Interestingly, as shown in Figure~\ref{fig:sigma_sa_fig3}, the latter two techniques are also the ones showing stronger $\sigma$ regularization, supporting our theoretical insights.
Figure~\ref{fig:adaptive_lr_sa} contains the results of Experiment~\ref{item:experiment1}. Reducing the learning rate, according to the sharpness increase proven in Theorem~\ref{thm:lambda_max_bound_squared}, is sufficient for avoiding the asymptotic collapse.
However, only using an adaptive learning rate in Standard PPO does not recover the ability to adapt to new environments. Indeed, while other methods achieve rewards of approximately 4000, this approach stagnates at rewards slightly above 2000.
Figure~\ref{fig:std_l2_sa} shows that simply matching the evolution of $\sigma$ to the L2 regularization baseline yields comparable results. 
Moreover, Figure~\ref{fig:deterministic_box_sa} shows that the evaluation of the two deterministic policies obtained is completely equivalent. 
It also highlights that $\sigma$ is a major confounding factor, as CBP significantly underperforms the other two methods, in contrast with the findings of~\cite{Dohare2024}, achieving an average reward that is 20\% lower.
Indeed, these results indicate that many state-of-the-art techniques that try to mitigate LoP by modifying the reward structure or performing operations on the neural networks' weights are indistinguishable from approaches that simply focus on $\sigma$. 
Their effectiveness can be fully explained by the standard deviation alone.
\negspace
\subsection{Task perturbation with real world deployment}\negspace
In this section we explore the role of $\sigma$ in adapting a policy to a perturbed task. To do this, we train on the \code{Isaac-Reach-Franka-v0} and perturb it by locking one joint after a pretraining phase. To further validate the results, we deploy the learned policies on a real 7-DoF Franka Emika Panda robotic manipulator.
\negspace
\subsubsection{Results in simulation}\negspace
\begin{figure}[t]
    \centering
\includegraphics[width=\linewidth]{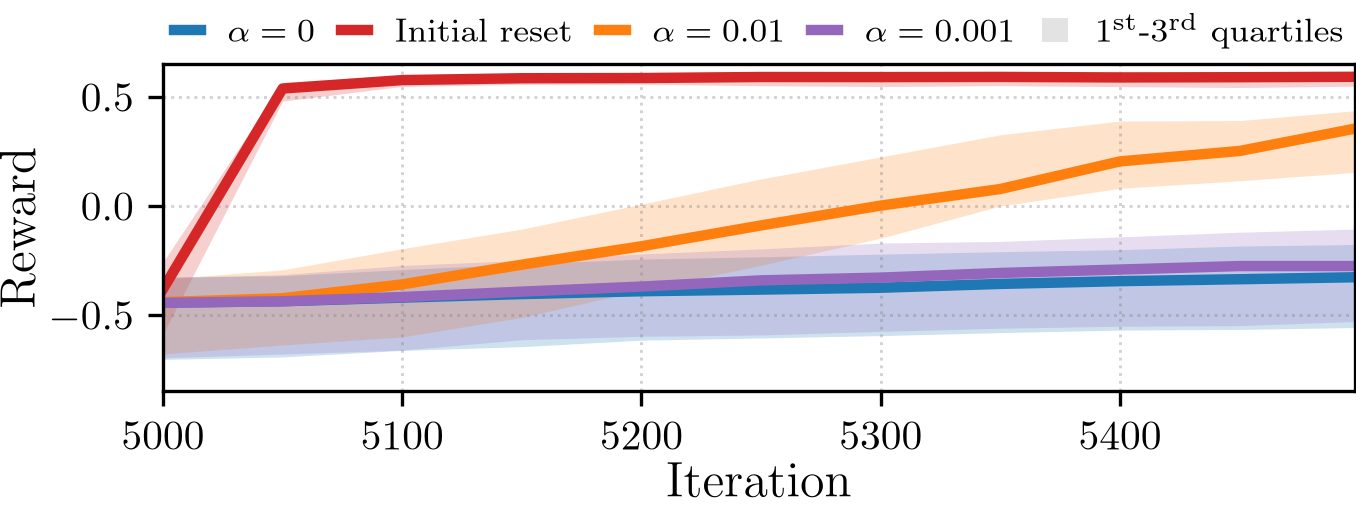}
    \caption{Deterministic policies rewards curve trained in \code{Isaac-Reach-Franka-v0} with the base joint locked.}
    \label{fig:lj_curves}
\end{figure}

\paragraph{Experimental setup} In this experiment, we train 
the policy
on \code{Isaac-Reach-Franka-v0} with 4096 parallel environments over 30 seeds for 5000 iterations. 
For pretraining, we use the policies trained with periodic $\sigma$ resets,
as we found that they show smaller variance across seeds and better performance on the main task. 
However, this choice does not affect our results, as we only aim to show that $\sigma$ has a strong impact over the ability of a pretrained policy to learn a new task, regardless of the pretraining process.
After this initial pretraining phase, the task is perturbed by locking the first joint, so that the policy can no longer rely on it to reach its target end-effector position, and we continue  training for 500 iterations. 
To assess the relevance of $\sigma$ to task adaptation, we compare different training strategies: we train with different values of the entropy regularization coefficient $\alpha\in\{0,0.001,0.01\}$ and with a single initial restart of $\sigma$ to its initial value of 1.0.\negspace
\paragraph{Results}Figure~\ref{fig:lj_curves} compares the reward curves of the deterministic policies across the 30 seeds.
These results clearly indicate that there is a very strong advantage in increasing $\sigma$ either by stronger entropy regularization or with an initial reset under non-stationarity.
\negspace
\subsubsection{Deployment on the real robot}\negspace
\begin{figure}[t]
    \centering\includegraphics[width=\linewidth]{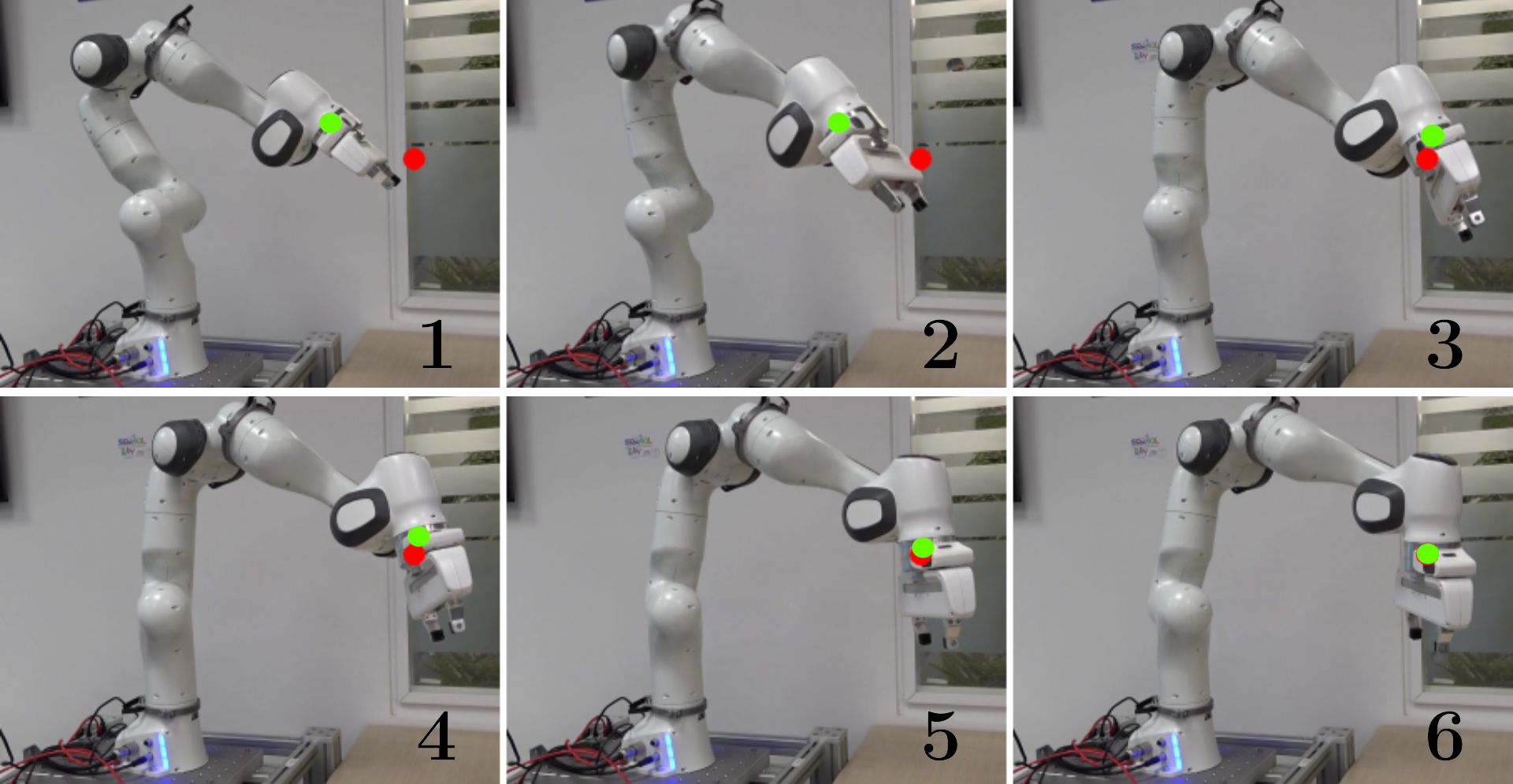}
    \caption{Qualitative deployment on the real Franka Emika Robot of a policy trained on the base task \code{Isaac-Reach-Franka-v0}. The red and green dots indicate the target and current position, respectively.}
    \label{fig:panda_qualitative}
\end{figure}
\begin{figure}[t]
    \centering
\includegraphics[width=\linewidth]{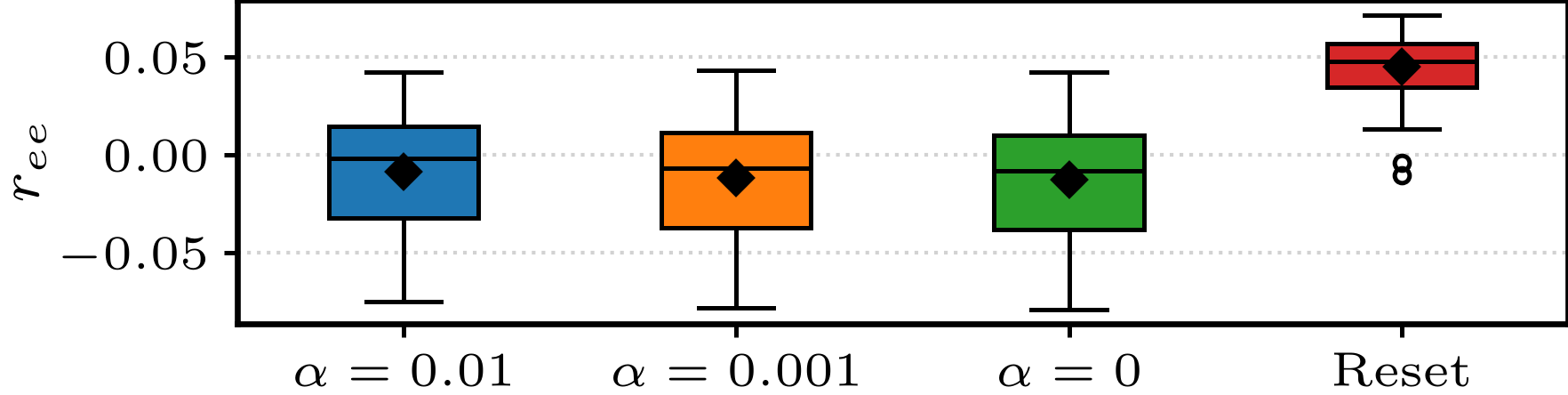}
    \caption{Reward $r_{ee}$ achieved in the real world deployment of the joint-locking task.}
    \label{fig:boxplots_real_world}
\end{figure}
\begin{table}[t]
\centering
\small
\setlength{\tabcolsep}{6pt}
\begin{tabular}{|l | c c|c c|}
\hline
\textbf{Method} & \textbf{Mean} $r_{ee}$ & \textbf{Std}  & \textbf{$p$-value} & \textbf{Sig.} \\
\hline
Reset          & \textbf{0.0449}   & \textbf{0.0178}   & --            & --  \\
$\alpha=0.01$  & \underline{-0.0086} & \underline{0.0301} & $< 10^{-16}$  & *** \\
$\alpha=0.001$ & -0.0118           & 0.0309            & $< 10^{-16}$  & *** \\
$\alpha=0$     & -0.0130           & 0.0309            & $< 10^{-17}$  & *** \\
\hline
\end{tabular}
\caption{Welch $t$-tests in the locked-joint task comparing each method against initial $\sigma$ reset approach.}
\label{tab:weighted_error_ttest_locked_joint}
\end{table}
\paragraph{Experimental setup} The real-world setup as well as the qualitative deployment of one pretrained policy are shown in Figure~\ref{fig:panda_qualitative}.
To compare the different methods, the 30 seeds are ranked based on their overall reward in simulation, and for each method we choose the seed achieving the median reward after 100 iterations. 
In fact, as shown in Figure~\ref{fig:lj_curves}, after 100 iterations the reset of $\sigma$ already solves the task.
Then, the deployment is performed with this seed only, sampling 50 different target points for each method. 
Notably, at the beginning of the test of each policy, the arm is reset to the default joint configuration $[0.0, -0.57, 0.0, -2.81, 0.0, 3.04, 0.74]$ and the same target points are used to compare the different policies.
No reset occurs between target points, so  the previous position is used as a starting point to reach the next one.
We compare the final position and orientation errors, as these metrics are not strongly affected by the sim-to-real gap.
Let $p_{ee,t}$ and $q_{ee,t}$ denote the current end-effector position and quaternion, while $p^{*}_{ee,t}$ and $q^{*}_{ee,t}$ denote the target ones. Then, given
\begin{equation}
    \begin{aligned}
        \text{err}_\text{pos} &=\|p_{ee}-p^{*}_{ee}\|_2 \\
        \text{err}_\text{ori} &=\left\|\log(\Phi(q_{ee}) \Phi^{-1}(q^{*}_{ee}))^\vee\right\|_2 \\
        \text{err}_\text{pos}^\text{(fine)}&=1 - \tanh\left(
        \frac{\text{err}_\text{pos}}{0.1}\right)
    \end{aligned}
\end{equation}
the scores we compare are obtained by
\begin{equation}\label{eq:weighted reward}
    r_{ee} = -0.2\, \text{err}_\text{pos} + 0.1 \text{err}_\text{pos}^{\text{(fine)}}
    -0.1\,\text{err}_\text{ori}.
\end{equation}
Moreover,
$\Phi(\cdot)\colon \mathbb{H}\to SO(3)$ maps the quaternions to the $3\times 3$ corresponding orthogonal matrix, $\log\colon SO(3)\to\frak{so}(3)$ is the matrix logarithm and $(\cdot)^\vee\colon\frak{so}(3)\to\R^3$ maps the elements of the Lie algebra to their three-dimensional representation.
The structure and the coefficients of~\eqref{eq:weighted reward} are motivated by the reward used at training time, excluding penalties for abrupt changes in position and velocity. To compare the results and ensure statistical significance, we perform the Welch $t$-test on the mean value of $r_{ee}$ between the policies trained with different values of $\alpha$ and the one trained with the reset of $\sigma$, and also report the standard deviation of the reward across points.
\negspace
\paragraph{Results}Figure~\ref{fig:boxplots_real_world} reports the results, while the Welch $t$-tests are shown in Table~\ref{tab:weighted_error_ttest_locked_joint}.
Simulation results successfully translate to real world tests, with a statistically significant higher reward, improved from approximately  -0.01 to 0.045.
Also, the standard deviation of the reward is significantly higher for the baselines compared to the one achieved by resetting $\sigma$.
\negspace
\section{CONCLUSIONS}\negspace
We have shown a novel connection between LoP and exploration with Gaussian policies, with both theoretical and experimental results.
We derived mathematical bounds characterizing the PPO loss landscape, showing how $\sigma\to0$ causes the gradient norm and Hessian sharpness to diverge and the Hessian approximate rank to collapse. We then numerically verified these results and provided experimental evidence, including on a real-world robotic task, that  LoP can be caused by $\sigma\to0$.

Future work will  examine whether policy entropy  affects LoP for other state-of-the-art RL methods and investigate its relationship with catastrophic forgetting.

\section*{Acknowledgments}
This study was carried out within the FAIR - Future Artificial Intelligence Research and received funding from the European Union Next-GenerationEU (PIANO NAZIONALE DI RIPRESA E RESILIENZA (PNRR) – MISSIONE 4 COMPONENTE 2, INVESTIMENTO 1.3 – D.D. 1555 11/10/2022, PE00000013). This manuscript reflects only the authors’ views and opinions, neither the European Union nor the European Commission can be considered responsible for them. E.\ C.\ acknowledges support from COST Action InterCoML (CA24136).

\bibliography{bibliography}

\clearpage
\appendix
\thispagestyle{empty}

\onecolumn

\section{Useful results}
We provide here and explicit derivation of the gradient and the Hessian of $\LCLIP$, as well as a lemma used in the proofs.
\subsection{Gradient of the clipped loss}
Let $\unclipped$ under policy $\theta$ and clipping parameter $\varepsilon$ be the set
\begin{equation}
    \unclipped \coloneqq \{(s,a) \colon\, r_\theta(s,a)<1+\varepsilon, \hat A(s,a) \geq 0\}\cup\{(s,a) \colon\, r_\theta(s,a)>1-\varepsilon, \hat A(s,a) \leq 0\}
\end{equation}
so that we can rewrite $\LCLIP$ as follows:
\begin{equation}\label{eq:lclip_piecewise}
    \LCLIP(s,a) = 
    \begin{cases}
        r_\theta(s,a) \hat A(s,a) \qquad &\text{if } (a,s)\in\unclipped \\
        (1-\varepsilon)\hat A(s,a) \qquad &\text{if } (a,s)\not\in\unclipped \text{ and } \hat A(s,a)<0\\
        (1+\varepsilon)\hat A(s,a) \qquad &\text{if } (a,s)\not\in\unclipped \text{ and } \hat A(s,a)\geq0\\
    \end{cases}.
\end{equation}
The gradient of $\LCLIP$ with respect to $\theta$ is not defined on $\unclippedbrd$ and it is identically equal to 0 outside of $\unclipped$ as the terms $(1\pm\varepsilon)\hat A(s,a)$ do not depend on $\theta$, so that
\begin{equation}\label{eq:grad_unclipped_region}
     \nabla_\theta\LCLIP(s,a)\equiv 0 \qquad \text{for all } (s,a)\in \left(\unclipped\cup\unclippedbrd\right)^c,
\end{equation}
where $(\cdot)^c$ denotes the complement of a set.
As a consequence of~\eqref{eq:lclip_piecewise} and~\eqref{eq:grad_unclipped_region}, we can write
\begin{equation}\label{eq:lclip_grad_no_log}
    \nabla_\theta\LCLIP(s,a)= \indicator{\unclipped}(s,a)\frac{\nabla_\theta\pi_\theta(a|s)}{\pi_{\theta_{old}}(a|s)} \hat A(s,a),\qquad \text{for all } (s,a)\not\in\unclippedbrd
\end{equation}
where $\indicator{\unclipped}(s,a)$ is the indicator function:
\begin{equation}
    \indicator{\unclipped}(s,a) \coloneqq \begin{cases}
        1 \qquad &\text{if }(s,a)\in\unclipped\\
        0 \qquad   &\text{if }(s,a) \not\in\unclipped
    \end{cases}.
\end{equation}
Recall now the log-derivative trick, i.e., for a given function $f\colon \cX \to \R$, it holds that 
\begin{equation}
    \nabla \log f(x) = \frac{\nabla f(x)}{f(x)},
\end{equation}
and rearranging the terms yields
\begin{equation}\label{eq:log_derivative_trick}
    \nabla f(x)  = f(x)\nabla\log f(x).
\end{equation}
By applying~\eqref{eq:log_derivative_trick} to~\eqref{eq:lclip_grad_no_log}, we obtain
\begin{equation}
    \nabla_\theta\LCLIP(s,a)= \indicator{\unclipped}(s,a)r_\theta(s,a)\nabla_\theta\log\pi_\theta(s,a) \hat A(s,a) 
\end{equation}
for all $(s,a)\not\in\unclippedbrd$, where $r_\theta$ is the policy density ratio $r_\theta(s,a)= \pi_\theta(a\,|\,s)/\pi_{\theta_{old}}(a\,|\,s)$

Let us now assume that $\pi_\theta$ is a deep isotropic Gaussian policy, parameterized by $\theta=(\phi,\sigma)$. The computations above also hold for the gradient with respect to $\phi$, so that we obtain
\begin{equation}\label{eq:lclip_log_trick_gradient}
    \nabla_\phi\LCLIP(s,a)= \indicator{\unclipped}(s,a)r_\theta(s,a)\nabla_\phi\log\pi_\theta(s,a) \hat A(s,a) 
\end{equation}
for all $(s,a)\not\in\unclippedbrd$. 
Since
\begin{equation}
    \log \pi_\theta(a|s) = -\frac{n}{2}\log(2\pi\sigma^2)-\frac{1}{2\sigma^2}(a-\mu_\phi(s))^\top(a-\mu_\phi(s)),
\end{equation}
the derivative with respect to the $j$-th weight $\phi_j\in\phi$ becomes
\begin{equation}
    \begin{aligned}
    \frac{\partial}{\partial\phi_j} \log \pi_\theta(a|s)&= -\frac{1}{2\sigma^2} \frac{\partial}{\partial\phi_j}(a-\mu_\phi)^\top(a-\mu_\phi) \\
    &= \frac{1}{\sigma^2} (a-\mu_\phi)^\top \frac{\partial\mu_\phi(s)}{\partial\phi_j} \\
    &= \frac{1}{\sigma} \xi^\top \frac{\partial\mu_\phi(s)}{\partial\phi_j} \\
    &= \frac{1}{\sigma} (J_{\mu_\phi}^\top(s)\xi)_j, 
    \end{aligned}
\end{equation}
where $J_{\mu_\phi}^\top(s)$ denotes the Jacobian of the deterministic policy $\mu_\phi$ computed in the point $s$. 

\noindent Therefore, we obtain the following
\begin{equation}\label{eq:log_policy_grad_final}
    \nabla_\phi\log\pi_\theta(a|s) = \frac 1\sigma J_{\mu_\phi}^\top(s)\xi
\end{equation}
and equation~\eqref{eq:lclip_log_trick_gradient} becomes
\begin{equation}
    \nabla_\theta\LCLIP(s,a)= \indicator{\unclipped}(s,a)r_\theta(s,a) \hat A(s,a)   \frac 1\sigma J_{\mu_\phi}^\top(s)\xi
\end{equation}
for all $(s,a)\not\in\unclippedbrd$.\\

\begin{rmrk}
    Given a state $s$ we always assume that actions $a$ are sampled from $\pi_{\theta_{old}}$, so that $a\sim\pi_{\theta_{old}}(\cdot|s)=\cN(\mu_{\phi_{old}}(s),\sigma_{old}^2 I)$. 
This is motivated by the typical PPO implementation, consisting in a sample collection phase, in which rollouts are collected according to $\pi_{\theta_{old}}$ and then the policy $\theta_{old}$ is updated by performing multiple optimization steps using the gradient of $\LCLIP$.
\end{rmrk}

\subsection{Hessian of the clipped loss}\label{app:hessian_decomposition}

Let $\xi =\xi(s,a)\coloneqq(a-\mu_{\phi}(s))/{\sigma}$, then
\begin{align*}
    \frac{\partial^2}{\partial\phi_i\partial\phi_j}\log\pi_\theta(a|s) &= \frac{1}{\sigma^2}\frac{\partial}{\partial \phi_i}(a-\mu_\phi(s))^\top\frac{\partial\mu_\phi(s)}{\partial\phi_j}\\
    &=\frac{1}{\sigma^2}(a-\mu_\phi(s))^\top\frac{\partial^2\mu_\phi(s)} {\partial\phi_i\partial\phi_j}-\frac{1}{\sigma^2}\frac{\partial\mu_\phi^\top(s)}{\partial\phi_i}\frac{\partial\mu_\phi(s)}{\partial\phi_j}\\
    &=\frac{1}{\sigma}\xi^\top\frac{\partial^2\mu_\phi(s)} {\partial\phi_i\partial\phi_j}-\frac{1}{\sigma^2}\frac{\partial\mu_\phi^\top(s)}{\partial\phi_i}\frac{\partial\mu_\phi(s)}{\partial\phi_j},
\end{align*}
yielding
\begin{equation}\label{eq:log_policy_hessian_final}
    \nabla^2_\phi\log\pi_\theta(a|s) = -\frac{1}{\sigma^2}J_{\mu_\phi}(s)J_{\mu_\phi}^\top(s) +\frac{1}{\sigma}\sum_{k=1}^n\xi^{(k)}\nabla^2_\phi\mu_\phi^{(k)}(s).
\end{equation}
for all $(s,a)\not \in \unclippedbrd$.
Now, let $r_\theta$ be the policy density ratio $r_\theta(s,a)=\pi_\theta(a|s)/\pi_{\theta_{old}}(a|s)$. Then, we get
\begin{align*}
    \nabla^2_\phi r_\theta(s,a) \hat A (s,a) &= r_\theta(s,a)\hat A(s,a)[(\nabla_\phi\log\pi_\theta(a|s))(\nabla_\phi\log\pi_\theta(a|s))^\top + \nabla^2\log \pi_\theta(a|s)]\\
    &=r_\theta(s,a)\hat A(s,a)\Bigg(\frac{1}{\sigma^2}J_\phi^\top(s)\xi\xi^\top J_\phi(s) - \frac{1}{\sigma^2}J_\phi^\top(s)J_\phi(s)\\
    &\qquad\qquad\qquad\qquad+ \frac{1}{\sigma}\sum_{k=1}^n\xi^{(k)}\nabla^2_\phi\mu_\phi^{(k)}(s)\Bigg)\\
    &=r_\theta(s,a)\hat A(s,a)\left(\frac{1}{\sigma^2}J_\phi^\top(s)[\xi\xi^\top -I] J_\phi(s) + \frac{1}{\sigma}\sum_{k=1}^n\xi^{(k)}\nabla^2_\phi\mu_\phi^{(k)}(s)\right),
\end{align*}
where the second equality comes from~\eqref{eq:log_policy_grad_final} and~\eqref{eq:log_policy_hessian_final}.
Finally, since
\begin{equation}
\nabla^2_\phi\LCLIP(a|s) = \indicator{\unclipped} \nabla^2_\phi r_\theta(a|s) \hat A (s,a)   
\end{equation}
we obtain 
\begin{equation}
    \nabla^2_\phi\LCLIP(s,a) 
    = \indicator{\cU_\theta} r_\theta(a|s)\hat A(s,a)\left(\frac{1}{\sigma^2}J_\phi^\top(s)[\xi\xi^\top -I] J_\phi(s) + \frac{1}{\sigma}\sum_{k=1}^n\xi^{(k)}\nabla^2_\phi\mu_\phi^{(k)}(s)\right).
\end{equation}
away from $\unclippedbrd$, where the derivatives would not be defined.

By adopting the notation of~\eqref{eq:hessian_quantities} we get
\begin{equation}
    H_\theta(s,a)= W_\theta(s,a)\left(\frac{1}{\sigma^2}G_\phi(s,\xi) + \frac{1}{\sigma}R_\phi(s,\xi)\right).
\end{equation}

\subsection{A useful lemma}\label{app:sec_lemma_kurtosis}
The following Lemma will be useful in the Theorem proofs.

\begin{lemma}\label{lemma:kurtosis}
    Let $X\geq0$ be a positive random variable, such that there exists $K>0$ for which
    \begin{equation}\label{eq:lemma_bounded_moments_hyp}
        0<\E[X^2]\leq K \E[X]^2 <\infty.
    \end{equation}
    If there exists a value $C\geq0$ for which
    \begin{equation}
        \E[X] \geq C,
    \end{equation}
    then,
    \begin{equation}
        \E[\sqrt{X}] \geq \frac{1}{4K}\sqrt{\frac C2}.
    \end{equation}
\end{lemma}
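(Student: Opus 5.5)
The plan is to use a Paley--Zygmund argument, which naturally produces both the factor $\frac14$ and the threshold $\frac12$ that appear in the claimed constant. First dispose of the trivial case $C=0$, where the bound reads $\E[\sqrt X]\geq 0$ and follows from $X\geq0$. Assume then $C>0$, so that $\E[X]\geq C>0$. Set $m\coloneqq\E[X]$, which is finite by~\eqref{eq:lemma_bounded_moments_hyp}.

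The key step is to lower bound the probability of the event $\{X>m/2\}$. Split
\begin{equation*}
m=\E\bigl[X\indicator{\{X\leq m/2\}}\bigr]+\E\bigl[X\indicator{\{X>m/2\}}\bigr]\leq \frac m2+\E[X^2]^{1/2}\,\P(X>m/2)^{1/2}.
\end{equation*}
The second term is bounded by Cauchy--Schwarz. Rearranging gives
\begin{equation*}
\P(X>m/2)\geq \frac{m^2}{4\E[X^2]}\geq\frac{1}{4K},
\end{equation*}
where the last inequality uses the moment hypothesis $\E[X^2]\leq K\E[X]^2$. The positivity assumption $\E[X^2]>0$ ensures the division is legitimate.

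Next, since $\sqrt X\geq0$, drop the contribution of $\{X\leq m/2\}$. On the complementary event we have $\sqrt X>\sqrt{m/2}$, so
\begin{equation*}
\E[\sqrt X]\geq \sqrt{\frac m2}\,\P\Bigl(X>\frac m2\Bigr)\geq \frac{1}{4K}\sqrt{\frac m2}\geq\frac{1}{4K}\sqrt{\frac C2},
\end{equation*}
using monotonicity of the square root and $m\geq C$ in the final step.

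I do not expect a real obstacle here. The only point requiring care is the Paley--Zygmund step, specifically checking that Cauchy--Schwarz is applied to $X\indicator{\{X>m/2\}}$ so that the factor $\P(\cdot)^{1/2}$ appears. One could also remark that Jensen forces $K\geq1$, and that a H\"older argument, $\E[X]\leq\E[\sqrt X]^{2/3}\E[X^2]^{1/3}$, even yields the sharper bound $\E[\sqrt X]\geq\sqrt{C/K}$. The Paley--Zygmund route is nonetheless the one matching the stated constant.
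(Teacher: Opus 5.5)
Your proposal is correct and follows the paper's proof essentially step for step. The paper applies Paley--Zygmund with $\eta=\frac12$ to get $\P\bigl(X\geq\frac12\E[X]\bigr)\geq\frac{1}{4K}$, restricts $\E[\sqrt X]$ to that event, and concludes with $\E[X]\geq C$; the only differences in yours are cosmetic: you derive Paley--Zygmund inline via Cauchy--Schwarz, and you use the strict event $\{X>m/2\}$ throughout, which sidesteps the paper's harmless $>$ versus $\geq$ mismatch (and your H\"older remark giving the sharper bound $\sqrt{C/K}$ is also valid).
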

\begin{proof}
    Since the first and second moment are bounded and $X\geq0$ we can apply the Paley-Zygmund's inequality,
    \begin{equation}
        \P\left( X\geq \eta\, \E [X]\right)
        \geq
        (1-\eta)^2
        \frac{(\E [X])^2}{\E [X^2]}\qquad  \text{for every } \eta\in(0,1).
    \end{equation}
    Then, by choosing $\eta=1/2$, we apply~\eqref{eq:lemma_bounded_moments_hyp} to the previous inequality and get
    \begin{equation}\label{eq:lemma_bounded_moments_ineq_pz}
        \P\left( X\geq \frac12\, \E [X]\right)\geq \frac{1}{4K}.
    \end{equation}
    Now, since $X>0$ we have
    \begin{equation}
        \E[\sqrt X] \geq \E[\indicator{X>\frac12\E[X]}\sqrt X] = \int_{X>\frac12\E[X]}\sqrt X\de \P.
    \end{equation}
    Naturally, over the event $\{X>\frac12\E[X]\}$ it holds that $\sqrt X>\sqrt{\frac12\E[X]}$, so that 
    \begin{equation}
    \begin{aligned}
         \E[\sqrt X] &\geq \int_{X>\frac12\E[X]}\sqrt X\de \P \\
         &\geq \sqrt{\frac12\E[X]}\int_{X>\frac12\E[X]}\de \P \\
         & = \sqrt{\frac12\E[X]}\P\left( X\geq \frac12\, \E [X]\right) \\
         & \geq \sqrt{\frac12\E[X]}\frac{1}{4K}
    \end{aligned}
    \end{equation}
    where the last inequality comes from~\eqref{eq:lemma_bounded_moments_ineq_pz}.
    Finally, since $\E[X] \geq C$ we get
    \begin{equation}
         \E[\sqrt X] \geq \sqrt{\frac12\E[X]}\frac{1}{4K} \geq \frac{1}{4K}\sqrt{\frac C2}
    \end{equation}
\end{proof}

\section{PROOF OF THEOREM~\ref{thm:minibatch_gradient_norm}}\label{app:proof_minibatch_thm}

To prove Theorem~\ref{thm:minibatch_gradient_norm} we need the following intermediate result.

\begin{lemma}\label{lemma:gradient_norm_bound}
    Let $\pi_\theta(\cdot|s)$ and  $\pi_{\theta_{old}}(\cdot|s)$ be two deep Gaussian policies parameterized by $\theta=(\phi,\sigma)$ and $\theta_{old}=(\phi_{old},\sigma_{old})$, respectively. Assume that the unclipped border $\unclippedbrd$ has probability zero under the state-action measure $\nu^T_{\pi_{\theta_{old}}}$ induced by $\pi_{\theta_{old}}$ and that the following ratio exists and is finite:
    \begin{equation}
        \kappa_\theta(s,a)\coloneqq \frac{\Esa[\|\nabla_\phi\LCLIP(s,a)\|^2] }{\E_{s,a\sim \nu^T_{\pi_{\theta_{old}}}}[\|\nabla_\phi\log\pi_\theta(a|s)\|^2]}.
    \end{equation}
    Then, 
    \begin{equation}\label{eq:gradient_lclip_norm_bound}
    \Esa[\|\nabla_\phi\LCLIP(s,a) \|^2] \geq \kappa_\theta(s,a)\frac{\sigma_{old}^2}{\sigma^4}\Es[\|J_{\mu_\phi}(s)\|_F^2 ].
    \end{equation}
\end{lemma}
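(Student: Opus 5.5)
The lemma is essentially a reformulation of the definition of $\kappa_\theta$. By that definition,
\[
\Esa[\|\nabla_\phi\LCLIP(s,a)\|^2]=\kappa_\theta\,\Esa[\|\nabla_\phi\log\pi_\theta(a|s)\|^2],
\]
so it suffices to show
\[
\Esa[\|\nabla_\phi\log\pi_\theta(a|s)\|^2]\geq \frac{\sigma_{old}^2}{\sigma^4}\Es[\|J_{\mu_\phi}(s)\|_F^2],
\]
and then multiply by $\kappa_\theta\geq 0$. This quantity is nonnegative because it is a ratio of nonnegative expectations.

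\textbf{Step 1: rewrite the score in terms of $\xi$.} By \eqref{eq:log_policy_grad_final}, which holds $\nu^T$-a.e. because $\unclippedbrd$ is null,
\[
\|\nabla_\phi\log\pi_\theta(a|s)\|^2=\frac1{\sigma^2}\,\xi^\top J_{\mu_\phi}(s)J_{\mu_\phi}^\top(s)\,\xi,
\qquad \xi=(a-\mu_\phi(s))/\sigma .
\]

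\textbf{Step 2: condition on $s$ and compute the Gaussian moment.} I will use the tower rule, conditioning on $s\sim\rho^T_{\pi_{\theta_{old}}}$. By the Remark, given $s$ we have $a\sim\cN(\mu_{\phi_{old}}(s),\sigma_{old}^2I)$. Hence
\[
\xi\mid s\sim\cN\!\left(m(s),\tfrac{\sigma_{old}^2}{\sigma^2}I\right),\qquad m(s)=\frac{\mu_{\phi_{old}}(s)-\mu_\phi(s)}{\sigma}.
\]
For a PSD matrix $A$ and $\xi\sim\cN(m,\Sigma)$, the identity $\E[\xi^\top A\xi]=\mathrm{tr}(A\Sigma)+m^\top A m$ holds. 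This is the non-standard adaptation of $\E\|A\zeta\|^2=\|A\|_F^2$. Taking $A=J_{\mu_\phi}J_{\mu_\phi}^\top$ and $\Sigma=(\sigma_{old}^2/\sigma^2)I$ gives
\[
\E[\xi^\top A\xi\mid s]=\frac{\sigma_{old}^2}{\sigma^2}\|J_{\mu_\phi}(s)\|_F^2+\|J_{\mu_\phi}^\top(s) m(s)\|^2\geq\frac{\sigma_{old}^2}{\sigma^2}\|J_{\mu_\phi}(s)\|_F^2 .
\]
Here we drop the nonnegative mean term. Dividing by $\sigma^2$ and averaging over $s$ yields the displayed bound.

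\textbf{Main obstacle.} The delicate part is measure-theoretic bookkeeping rather than any inequality. The joint measure $\nu^T_{\pi_{\theta_{old}}}$ must disintegrate as $\rho^T_{\pi_{\theta_{old}}}(\de s)\,\pi_{\theta_{old}}(\de a|s)$. This holds because $\nu^T$ is a time-average of the laws of $(s_t,a_t)$ with $a_t\sim\pi_{\theta_{old}}(\cdot|s_t)$, so the conditional law of $a$ given $s$ is the same Gaussian at every $t$. The null set $\unclippedbrd$ must be excluded so that the gradient formula holds almost everywhere. Finally, $\Es[\|J_{\mu_\phi}\|_F^2]$ should be finite, or else the inequality holds trivially in $[0,\infty]$. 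Finiteness of $\kappa_\theta$, and in particular of the denominator, guarantees that every expectation involved is well defined. Tonelli's theorem then justifies the conditioning, since all integrands are nonnegative.
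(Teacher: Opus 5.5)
Your proposal is correct and follows the paper's proof essentially step for step. Like the paper, you use the definition of $\kappa_\theta$ to reduce to the score term, write $\nabla_\phi\log\pi_\theta(a|s)=\sigma^{-1}J_{\mu_\phi}^\top(s)\xi$, condition on $s$ with $\xi\mid s\sim\cN\bigl((\mu_{\phi_{old}}(s)-\mu_\phi(s))/\sigma,\,(\sigma_{old}^2/\sigma^2)I\bigr)$, apply the Gaussian quadratic-form identity (the paper's shifted and rescaled version of $\mathbb{E}\|A\zeta\|^2=\|A\|_F^2$), and drop the nonnegative mean term $\sigma^{-4}\|J_{\mu_\phi}^\top(s)(\mu_{\phi_{old}}(s)-\mu_\phi(s))\|^2$. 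Your extra measure-theoretic bookkeeping (disintegration of $\nu^T_{\pi_{\theta_{old}}}$, Tonelli) is sound and makes explicit what the paper leaves implicit.
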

\begin{proof}
    Observe that since the set $\unclippedbrd$ has probability zero under $\nu^T_{\pi_{\theta_{old}}}$ the gradients $\nabla_\phi\LCLIP(s,a)$ and $\nabla_\phi\log\pi_\theta(a|s)$ are defined almost everywhere and the integrals used to compute the expectations are well defined.
    Let 
    \begin{equation}\label{eq:xi_defn}
        \xi(s,a)\coloneqq(a-\mu_{\phi}(s))/{\sigma}
    \end{equation}
    and observe that $\xi$ is normally distributed conditionally on $s$ with distribution
    \begin{equation}\label{eq:xi_distribution}
        \xi\sim \cN\left(\frac{\mu_{\phi_{old}}(s)-\mu_\phi(s)}{\sigma}, \frac{\sigma^2_{old}}{\sigma^2}I\,\Bigm\lvert\, s\right).
    \end{equation}
    Let $\mu_\xi\colon\cS\to\cA$ and $\sigma_\xi$ denote the mean and variance of the Gaussian distribution~\eqref{eq:xi_defn}.
    The following direct calculations yield the desired inequality:
    \begin{equation*}
        \begin{aligned}
            \Esa[\|\nabla_\phi\LCLIP(s,a)\|^2] &= \kappa_\theta(s,a)\Esa[\|\nabla_\phi\log\pi_\theta(a|s)\|^2]\\
            &=\kappa_\theta(s,a)\Esxi\left[\left\| \frac{1}{\sigma}J_{\mu_\phi}^T(s)\xi\right\|^2\right] \\
            &=\kappa_\theta(s,a)\frac{1}{\sigma^2}\Es\left[\E_{\xi\sim\cN(\mu_\xi(s), \sigma_\xi)}\left[\left\| J_{\mu_\phi}^T(s)\xi\right\|^2\,\middle|\,s\right]\right]\\
            &=  \kappa_\theta(s,a)\frac{\sigma_{old}^2}{\sigma^4}\Es[\|J_{\mu_\phi}(s)\|_F^2 ]\\  &\quad+\kappa_\theta(s,a)\frac{1}{\sigma^4}\Es[\|J_{\mu_\phi}^\top(s)(\mu_{\phi_{old}}(s)-\mu_\phi(s))\|_2^2]\\
            &\geq \kappa_\theta(s,a)\frac{\sigma_{old}^2}{\sigma^4}\Es[\|J_{\mu_\phi}(s)\|_F^2 ],
        \end{aligned}
    \end{equation*}
    where the second equality comes from~\eqref{eq:log_policy_grad_final} and the third one uses the tower rule. Moreover, the fourth equality comes from the fact that for a fixed matrix $A$ and a standard normally distributed vector $\zeta\sim\cN(0,I)$,
    \[
    \E_\zeta[\|A\zeta\|^2] = \E_\zeta[\|A\|^2_F].
    \]
    By shifting and rescaling $\zeta$ and by linearity of expectation, we get the general formulation for any normal vector used in the equation.
\end{proof}

\begin{proof}[\textbf{Proof of Theorem~\ref{thm:minibatch_gradient_norm}}]
By Lemma~\ref{lemma:gradient_norm_bound}, the per-sample gradient
satisfies
\[
    \Esa[\|\nabla_\phi \LCLIP(s,a)\|^2]
    \geq
    \kappa_\theta
    \frac{\sigma_{old}^2}{\sigma^4}
    \Es\|J_{\mu_\phi}(s)\|_F^2 .
\]
Since the minibatch samples are i.i.d.,
\begin{equation}
    \begin{aligned}
    \EB[\left\|\bar g_\theta(B)\right\|^2]
    &= \E_{s_1,a_1,\ldots,s_N,a_N}\left[\frac{1}{N^2}\left\|\sum_{i=1}^N\nabla_\phi\LCLIP(s_i,a_i)\right\|^2\right]   \\
    &=\frac{1}{N^2}\sum_{i=1}^N\sum_{j=1}^N\E_{s_i,a_i,s_j,a_j}[\nabla_\phi\LCLIP(s_i,a_i)\cdot\nabla_\phi\LCLIP(s_j,a_j)]\\
    &=\frac{1}{N^2}\sum_{i=1}^N\E_{s_i,a_i,s_j,a_j}[\nabla_\phi\LCLIP(s_i,a_i)\cdot\nabla_\phi\LCLIP(s_i,a_i)]\\&
    \quad+ \frac{1}{N^2}\sum_{i=1}^N\sum_{j\neq i}\E_{s_i,a_i,s_j,a_j}[\nabla_\phi\LCLIP(s_i,a_i)\cdot\nabla_\phi\LCLIP(s_j,a_j)]\\
    &= \frac{1}{N^2}\sum_{i=1}^N\E_{s_i,a_i}\left[\|\nabla_\phi \LCLIP(s_i,a_i)\|^2\right] + \frac{N-1}{N^2}\sum_{i=1}^N\left\|\E_{s_i,a_i}[\nabla_\phi \LCLIP(s_i,a_i)]\right\|^2\\
    &=\frac1N \Esa\left[\|\nabla_\phi \LCLIP(s,a)\|^2\right]+\frac{N-1}{N}\left\|\Esa[\nabla_\phi \LCLIP(s,a)]\right\|^2 \\
    &\geq
    \frac1N \Esa\left[\|\nabla_\phi \LCLIP(s,a)\|^2\right].
    \end{aligned}
\end{equation}
Combining this inequality with the result of Lemma~\ref{lemma:gradient_norm_bound} we obtain
\begin{equation}\label{eq:gradient_lower_bound_minibatch}
    \EB[\|\bar g_\theta(B)\|^2]
    \geq
    \frac{\kappa_\theta}{N}
    \frac{\sigma_{old}^2}{\sigma^4}
    \Es[\|J_{\mu_\phi}(s)\|_F^2] .
\end{equation}
Finally, a direct application of Lemma~\ref{lemma:kurtosis} to~\eqref{eq:gradient_lower_bound_minibatch} with $X=\|\bar g_\theta(B)\|^2$ yields the desired result:
\begin{equation}
    \EB[\|\bar g_\theta(B)\|]
    \geq
    \frac{1}{4K_\theta\sqrt{2N}}
    \sqrt{\kappa_\theta}\,
    \frac{\sigma_{old}}{\sigma^2}
    \left(
        \Es[\|J_{\mu_\phi}(s)\|_F^2]
    \right)^{1/2}.
\end{equation}
    
\end{proof}

\section{PROOF OF THEOREM~\ref{thm:lambda_max_bound_squared}}\label{app:proof_lambda_max_thm}
To prove Theorem~\ref{thm:lambda_max_bound_squared} we need the following intermediate result.

\begin{lemma}\label{lemma:moments_gaussian}
    Let $\xi\sim\cN(\mu,\tau^2I)$ be a random vector with Gaussian distribution and let $u$ be a unit vector, i.e., $\|u\|_2 = 1$. Then, the vector $\zeta\coloneqq u ^\top \xi$ is normally distributed with distribution
    \begin{equation}
        \zeta\sim\cN(u^\top\mu, \tau^2)  
    \end{equation}
    and it holds that
    \begin{equation}\label{eq:moments_gaussian_ineq}
        \E_\zeta[(\zeta^2-1)^2]\geq2\tau^4.
    \end{equation}
\end{lemma}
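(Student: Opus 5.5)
The first claim is the standard fact that linear images of Gaussian vectors are Gaussian. I would verify it through the characteristic function: for $t\in\R$,
\[
\E[e^{it u^\top\xi}] = \exp\!\left(it\,u^\top\mu - \tfrac12 t^2\tau^2 u^\top u\right),
\]
and $u^\top u=\|u\|_2^2=1$. So $\zeta=u^\top\xi\sim\cN(u^\top\mu,\tau^2)$.

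For the inequality \eqref{eq:moments_gaussian_ineq}, set $m\coloneqq u^\top\mu$ and $Y\coloneqq\zeta^2$. I will use the bias--variance decomposition
\[
\E[(Y-1)^2]=\mathrm{Var}(Y)+(\E[Y]-1)^2\geq \mathrm{Var}(Y).
\]
This removes the constant $1$ from the problem entirely, so it only remains to lower bound $\mathrm{Var}(\zeta^2)$ by $2\tau^4$.

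To compute $\mathrm{Var}(\zeta^2)$, write $\zeta=m+\tau Z$ with $Z\sim\cN(0,1)$ and expand using the Gaussian moments $\E[Z]=\E[Z^3]=0$, $\E[Z^2]=1$, $\E[Z^4]=3$. This gives
\[
\E[\zeta^2]=m^2+\tau^2,\qquad \E[\zeta^4]=m^4+6m^2\tau^2+3\tau^4.
\]
Hence $\mathrm{Var}(\zeta^2)=\E[\zeta^4]-(m^2+\tau^2)^2=4m^2\tau^2+2\tau^4$. Combining the two steps,
\[
\E[(\zeta^2-1)^2]=2\tau^4+4m^2\tau^2+(m^2+\tau^2-1)^2\geq 2\tau^4.
\]

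There is no genuine obstacle here. The only place care is needed is the fourth-moment expansion, in particular keeping the cross terms $6m^2\tau^2$ correct. An alternative route is to expand $\E[(\zeta^2-1)^2]=\E[\zeta^4]-2\E[\zeta^2]+1$ directly and complete the square, which yields the same identity. In the intended application, $\tau=\sigma_{old}/\sigma$ from \eqref{eq:xi_distribution}, so the lemma gives the $2\sigma_{old}^4/\sigma^4$ factor used in the proof of Theorem~\ref{thm:lambda_max_bound_squared}.
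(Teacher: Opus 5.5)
Your proof is correct and is essentially the paper's argument: both compute $\E[\zeta^2]=m^2+\tau^2$ and $\E[\zeta^4]=m^4+6m^2\tau^2+3\tau^4$ and reach the same identity $\E[(\zeta^2-1)^2]=(m^2+\tau^2-1)^2+4m^2\tau^2+2\tau^4\geq 2\tau^4$. Your bias--variance framing only explains where the completed square comes from (the paper expands $\E[\zeta^4]-2\E[\zeta^2]+1$ and completes the square directly), and your version also avoids the paper's small typos: it writes the mean as $u^\top\xi$ instead of $u^\top\mu$, and $4\mu^2$ instead of $4\alpha^2$.
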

\begin{proof}
    The first part follows from the fact that
    \begin{equation}
        u^T\xi\sim\cN(u^\top\xi, \tau^2 u^\top I u)
    \end{equation}
    and, since $u^\top I u=u^\top u = \|u\|_2^2=1$ we have that 
     \begin{equation}
        \zeta\sim\cN(u^\top\mu,\tau^2).
    \end{equation}
    We now want to show that Equation~\eqref{eq:moments_gaussian_ineq} holds. Let
    \begin{equation}
        \alpha\coloneqq u^\top\mu,
    \end{equation}
    so that $\zeta \sim \cN(\alpha, \tau^2)$. Then, by using the second moments of normally distributed vectors and by linearity of expectation, we obtain:
    \begin{equation}
        \begin{aligned}
        \E_\zeta[(\zeta^2-1)^2] &= \E_\zeta[(\zeta^2-1)^2]\\ &= \E_\zeta[\zeta^4] -2\E_\zeta[\zeta^2] + 1\\
        &=\alpha^4 + 6\alpha^2\tau^2 + 3\tau^4 - 2\alpha^2 - 2\tau^2 + 1\\
        &=(\alpha^2 + \tau^2 -1)^2 + \tau^2(4\mu^2+2\tau^2)\\
        &\geq 2\tau^4
        \end{aligned}
    \end{equation}
\end{proof}
\begin{proof}[\textbf{Proof of Theorem~\ref{thm:lambda_max_bound_squared}}]
Recall the following inequality
\begin{equation}
    \|A+B\|^2
    \geq
    \frac12\|A\|^2-\|B\|^2
\end{equation}
which holds for any norm.
By applying it to $\rho(H_\theta(s,a)^2) =\|H_\theta(s,a)\|^2_\mathrm{op}$, and taking expectations we get
\begin{equation}
    \begin{aligned}
    \Esa[\rho(H_\theta(s,a)^2)]
    &\geq
    \frac{1}{2\sigma^4}\Esa[W_\theta(s,a)^2\|G_\phi(s,\xi)\|_{\mathrm{op}}^2]\\
    &\quad-\frac{1}{\sigma^2} \Esa[W_\theta(s,a)^2\|R(s,\xi)\|_{\mathrm{op}}^2] .
\end{aligned}
\end{equation}
Using the assumptions in~\eqref{eq:chi_C_theta}, we obtain
\begin{equation}\label{eq:hessian_thm_proof_intermediate_value}
    \Esa\left[\rho\left(H_\theta(s,a)^2\right)\right]
    \geq \frac{\chi_\theta}{2\sigma^4}
    \Esxi\left[\|G_\phi(s,\xi)\|_{\mathrm{op}}^2\right]-\frac{C_\theta}{\sigma^2}.
\end{equation}
\noindent Let
\begin{equation}\label{eq:lambda_1_definition}
     \lambda_1\coloneqq\rho(J_{\mu_\phi}(s)J_{\mu_\phi}(s)^\top)
\end{equation}
and let $u_1$ be a unit eigenvector of $J_{\mu_\phi}(s)J_{\mu_\phi}(s)^\top$ associated with $\lambda_1$.\\
Define
\begin{equation}
    v_1 \coloneqq \frac{J_{\mu_\phi}^\top u_1}{\sqrt{\lambda_1}},
\end{equation}
so that
\begin{equation}
    \|v_1\|=1, \qquad J_{\mu_\phi}v_1=\sqrt{\lambda_1}u_1.
\end{equation}
By the Rayleigh quotient,
\begin{equation}
    \|G(s,\xi)\|_{\mathrm{op}} = \max_i |\lambda_i(G(s,\xi))|
    \geq
    \left|
        v_1^\top G_\phi(s,\xi)v_1
    \right| =
    \lambda_1
    \left|
        u_1^\top(\xi\xi^\top-I)u_1
    \right|,
\end{equation}
where $\lambda_i(G_\phi(s,\xi))$ denotes the $i$-th eigenvalue of $G_\phi(s,\xi)$.\\
Recall that $\xi$ is normally distributed, with distribution shown in~\eqref{eq:xi_distribution}.\\
Then, let 
\begin{equation}
    \zeta\coloneqq u_1^\top\xi
\end{equation}
so that the previous equation becomes
\begin{equation}
    \|G(s,\xi)\|_{\mathrm{op}}^2\geq\lambda_1^2(\zeta^2-1)^2.
\end{equation}
By applying Lemma~\ref{lemma:moments_gaussian} we get 
\begin{equation}
    \zeta\sim\mathcal N\left(u_1^\top\frac{\mu_{\phi_{old}}(s)-\mu_\phi(s)}{\sigma},\frac{\sigma^2_{old}}{\sigma^2}\right).
\end{equation}
Taking conditional expectations yields
\begin{equation}\label{eq:operator_eigenvalue_ineq}
    \E_\xi\left[\|G_\phi(s,\xi)\|_{\mathrm{op}}^2\mid s\right]\geq\lambda_1^2\E_\zeta[(\zeta^2-1)^2\mid s].
\end{equation}
This inequality takes the expectation with respect to $\xi$ on the LHS and the expectation with respect to $\zeta$ on the RHS. This is motivated by the fact that we are implicitly using the pushforward measure induced by $\zeta$, using the law of the unconscious statistician.\\
By applying Lemma~\ref{lemma:moments_gaussian} to~\eqref{eq:operator_eigenvalue_ineq} and combining it with~\eqref{eq:lambda_1_definition}, we get
\begin{equation}
    \E_\xi\left[\|G_\phi(s,\xi)\|_{\mathrm{op}}^2\mid s\right]\geq2\frac{\sigma_{old}^4}{\sigma^4}
    \rho\left( J_{\mu_\phi}(s)J_{\mu_\phi}(s)^\top\right)^2.
\end{equation}
Averaging over $s$ and applying the tower rule in the RHS yields
\begin{equation}\Esxi\left[\|G_\phi(s,\xi)\|_{\mathrm{op}}^2\right]\geq2\frac{\sigma_{old}^4}{\sigma^4}
    \Es\left[\rho\left(J_{\mu_\phi}(s)J_{\mu_\phi}(s)^\top\right)^2\right].
\end{equation}
Combining this bound with~\eqref{eq:hessian_thm_proof_intermediate_value} gives the desired inequality
\begin{equation}
    \Esa\left[\rho\left( H_\theta(s,a)^2\right)\right]
    \geq\chi_\theta\frac{\sigma_{old}^4}{\sigma^8}\Es\left[\rho\left(J_{\mu_\phi}(s)J_{\mu_\phi}(s)^\top\right)^2\right]
    - \frac{C_\theta}{\sigma^2}.
\end{equation}
\end{proof}

\section{PROOF OF THEOREM~\ref{thm:hessian_effective_rank_collapse}}\label{app:proof_hess_eff_rank}
\begin{proof}
    Let 
    \begin{equation}
    \begin{aligned}
        M_1&\coloneqq\sum_{i=1}^N W_\theta(s_i,a_i)G_\phi(s_i,\xi_i)\\ M_2&\coloneqq\sum_{i=1}^N W_\theta(s_i,a_i)R_\phi(s_i,\xi_i)
    \end{aligned}
    \end{equation}
    so that 
    \begin{equation}
        \bar H_\theta(B) = \frac{1}{N\sigma^2}(M_1+\sigma M_2)
    \end{equation}
    and 
    \begin{equation}
        \lambda_i(\bar H_\theta(B)) = \frac{1}{N\sigma^2}\lambda_i((M_1+\sigma M_2))
    \end{equation}
    where $\lambda_i(\cdot)$ denotes the $i$-th eigenvalue. 
    Clearly, 
    \begin{equation}
        M_1+\sigma M_2\to M_1,\qquad \text{ as }\sigma\to 0
    \end{equation}
    and by the continuity of the eigenvalues we get
    \begin{equation}
        \lambda_i(M_1+\sigma M_2)\to \lambda_i(M_1), \qquad \text{ as }\sigma\to 0.
    \end{equation}
    Then, for all $j$ such that $\lambda_j(M_1)\neq0$,
    \begin{equation}
         \lambda_j(\bar H_\theta(B)) \sim \frac{1}{N\sigma^2}\lambda_j(M_1), \qquad \text{ as }\sigma\to 0
    \end{equation}
    and for each $k$ such that $\lambda_k(M_1)=0$ we get
    \begin{equation}
         \lambda_k(\bar H_\theta(B)) = o\left(\frac{1}{\sigma^2}\right), \qquad \text{ as }\sigma\to 0
    \end{equation}
    by the continuity of the eigenvalues, so that the eigenvalues of $\bar H_\theta(B)$ scale with $1/\sigma^2$ if and only if they are non-zero eigenvalues of $M_1$.
    Since $M_1$ is not identically equal to 0 and is symmetric, it has at least one non-zero eigenvalue. By definition, 
    \begin{equation*}
        G_\phi(s,\xi)\coloneqq J_{\mu_\phi}(s)^\top(\xi\xi^\top-I)J_{\mu_\phi}(s)
    \end{equation*}
    and since $\xi\in\R^n$, then $\xi\xi^\top\in\R^{n\times n}$.\\
    Recall that for any three matrices $A\in\R^{p\times q}$, $B\in\R^{q\times q}$, $C\in\R^{q\times p}$ it holds 
    \begin{equation}
        \rk{ABC}\leq\min(p,q),
    \end{equation}
    which implies 
    \begin{equation}
        \rk{W_\theta(s,a)G_\phi(s,\xi)}\leq n.
    \end{equation}
    Moreover, by the sub-additivity of the $\mathrm{rank}$ operator, we get
    \begin{equation}\label{eq:M1_upperbound}
    \begin{aligned}
        \rk{M_1} &= \rk{\sum_{i=1}^N W_\theta(s_i,a_i)G_\phi(s_i,\xi_i)}\\ &\leq \sum_{i=1}^N \rk{W_\theta(s_i,a_i)G_\phi(s_i,\xi_i)} \\&\leq \sum_{i=1}^N n= nN
    \end{aligned}
    \end{equation}
    which shows that there are at most $n\times N$ eigenvalues of $\bar H_\theta(B)$ that scale as $1/\sigma^2$.\\
    It remains to show that the stable rank of $\bar H_\theta(B)$ is asymptotically bounded by $\rk{M_1}$ for every $0<\delta<1$.\\
    Let $s_i(\cdot)$ denote the $i$-th singular value, and let
    \begin{equation}
        r\coloneqq \rk{M_1},
    \end{equation}
    then by the continuity of the singular values we have that
    \begin{equation}
        s_i(\bar H_\theta(B)) = \frac{1}{N\sigma^2}s_i(M_1+\sigma M_2) \sim \frac{1}{N\sigma^2}s_i(M_1)\qquad \text{ as $\sigma\to0$,}
    \end{equation}
    for all $i\in\{1,\ldots,r\}$ and 
    \begin{equation}
        s_j(\bar H_\theta(B)) = o\left(\frac{1}{\sigma^2}\right)\qquad \text{ as $\sigma\to0$.}
    \end{equation}
    for all $j\in\{r+1,\ldots,d\}$, where $d$ is the dimensionality of $\phi$. Therefore, 
    \begin{equation}
        \sum_{i=1}^d s_i(\bar H_\theta(B)) = \sum_{i=1}^r s_i(\bar H_\theta(B)) + \sum_{i=r+1}^d s_i(\bar H_\theta(B)) \sim \sum_{i=1}^r\frac{1}{N\sigma^2}s_i(M_1)
    \end{equation}
    as $\sigma\to0$. Therefore,
    \begin{equation}
        \frac{\sum_{i=1}^r s_i(\bar H_\theta(B))}{\sum_{i=1}^d s_i(\bar H_\theta(B))} \sim  \frac{\sum_{i=1}^r s_i(M_1)}{\sum_{i=1}^r s_i(M_1)} \to 1\qquad \text{ as $\sigma\to0$}
    \end{equation}
    which implies that for every $0<\delta<1$, there is a $\sigma_0$ such that
    \begin{equation}
         \frac{\sum_{i=1}^r s_i(\bar H_\theta(B))}{\sum_{i=1}^d s_i(\bar H_\theta(B))} > 1-\delta
    \end{equation}
    for all $\sigma<\sigma_0$. Therefore,
    \begin{equation}
    \limsup_{\sigma\to0}\mathrm{srank}^{(\delta)}(\bar H_\theta(B))\leq r
    \end{equation}
    for every $0<\delta<1$.
\end{proof}

\section{IMPLEMENTATION DETAILS}\label{app:implementation_details}
In this section we provide details over the implementation, the environments and the hyperparameters used, to facilitate the reproducibility of the experiments.
\subsection{Training setup for Slippery Ant}\label{app:sec_training_setup_sant}

This environment was used in~\cite{Dohare2024, dohare2022continual} to study the phenomenon of loss of plasticity in deep RL. It is a modification of the Mujoco Ant task, in which a quadruped needs to learn how to walk forward. The difference with the base task lies in the fact that every 2 million steps, the friction is resampled log-uniformly in the range $[0.02,2.00]$.\\
Below, we report a description of the reward, observation and action space used in this task. Notice that these descriptions match the well-known OpenAI Gym task from which the environment is built.

The \textbf{reward} is given by the following equation

\begin{equation}
r_t =
r_{t,\text{forward}} + r_{t,\text{control}} + r_{t,\text{contact}} + r_{t,{\text{survive}}}
\end{equation}
where $r_{t,\text{forward}}$ rewards the forward motion, $r_{t,\text{control}}$ penalizes abrupt changes in the actions, $ r_{t,\text{contact}}$ penalizes behaviors that result in the ant to experience intense contact forces and  $r_{t,{\text{survive}}}$ rewards the ant for not meeting the episode early termination condition. In fact, an episode will last 1000 timesteps, unless the body height at any given point is not positioned between $0.2$ and $1$. If this condition occurs the episode is immediately terminated.
Specifically, the following values are used:
\begin{equation}
    \begin{aligned}
        r_{t,\text{forward}} &= \frac{x_{t+1}-x_t}{\Delta t}, &\qquad
        r_{t,\text{control}} &= -0.5 \sum_{i=1}^8 a_{i,t}^2, \\
        r_{t,\text{contact}} &= 5 \cdot 10^{-4}\sum_j \left(\mathrm{clip}(F_{\text{ext},j},-1,1)\right)^2, &
        r_{t,\text{survive}} &= 1,
    \end{aligned}
\end{equation}
where $x_t$ and $x_{t+1}$ are the $x$ position at times $t$ and $t+1$, $a_{i,t}$ denotes the action applied to the $i$-th joint and $F_{\text{ext}, j}$ denotes the $j$-th external forces applied to the ant body.\\
The \textbf{observation space} is 111-dimensional, in which the first 27 observations contain the positions and relative angles of the body, while the remaining 84 contain the external forces.\\
Finally, the \textbf{action space} is 8-dimensional, where each entry is the torque applied to the corresponding joint.\\

\begin{table}[t]
\centering
\begin{tabular*}{\textwidth}{@{\extracolsep{\fill}}|ll|}
\hline
\textbf{Hyperparameter} & \textbf{Value} \\
\hline
Actor network shape & $111 \rightarrow 256 \rightarrow 256 \rightarrow 8$ \\
Critic network shape & $111 \rightarrow 256 \rightarrow 256 \rightarrow 1$ \\
Optimizer & Adam\\
Initial policy standard deviation & $1.0$ \\
Activation function & ReLU \\
Discount factor $\gamma$ & $0.99$ \\
GAE parameter $(\lambda)$ & $0.95$ \\
Clipping parameter $\varepsilon$ & $0.2$ \\
Number of mini-batches & $16$ \\
Optimization epochs per iteration & $10$ \\
Entropy coefficient&0\\
Learning rate & $1 \times 10^{-4}$ \\
\hline
\end{tabular*}
\caption{PPO training hyperparameters for Slippery Ant.}
\label{tab:ppo_hyperparameters_slippery_ant}
\end{table}
The studied environments are identical to~\cite{Dohare2024}. We provide here a brief explanation for reproducibility:
\begin{enumerate}
    \item \textbf{Standard PPO: }Adam coefficients are set to $\beta_1=0.9$ and $\beta_2=0.999$, no L2 regularization;
    \item \textbf{Tuned Adam: }Adam coefficients are set to $\beta_1=\beta_2=0.9$, no L2 regularization;
    \item \textbf{Tuned Adam + L2: }Adam coefficients are set to $\beta_1=\beta_2=0.9$,  L2 regularization with parameter $10^{-3}$;
    \item \textbf{Tuned Adam + L2 + CBP: }Adam coefficients are set to $\beta_1=\beta_2=0.9$, L2 regularization with parameter $10^{-4}$ and continual backpropagation, with the following hyperparameters:
    \begin{itemize}
        \item[-] replacement rate: $10^{-4}$;
        \item[-] decay rate: $0.99$;
        \item[-] maturity threshold $1000$.
    \end{itemize}
\end{enumerate}
All the additional training hyperparameters can be found in Table~\ref{tab:ppo_hyperparameters_slippery_ant}.

\subsection{Training setup for \code{Isaac-Reach-Franka-v0}}\label{app:sec_training_setup_franka}
\begin{table}[t]
\centering
\begin{tabular*}{\textwidth}{@{\extracolsep{\fill}}|ll|}
\hline
\textbf{Hyperparameter} & \textbf{Value} \\
\hline
Actor network shape & $32 \rightarrow 64 \rightarrow 64 \rightarrow 7$ \\
Critic network shape & $32 \rightarrow 64 \rightarrow 64 \rightarrow 1$ \\
Optimizer & Adam \\
Initial policy standard deviation & $1.0$ \\
Activation function & ELU \\
Discount factor $\gamma$ & $0.99$ \\
GAE parameter $(\lambda)$ & $0.95$ \\
Clipping parameter $\varepsilon$ & $0.2$ \\
Number of mini-batches& $4$ \\
Optimization epochs per iteration& $8$\\
Entropy coefficient & $0.001$ \\
Initial learning rate & $1\times 10^{-3}$ \\
Learning rate schedule &Adaptive\\
Desired KL divergence for adaptive learning rate & $0.01$ \\
\hline
\end{tabular*}
\caption{PPO training hyperparameters for \code{Isaac-Reach-Franka-v0}.}
\label{tab:ppo_hyperparameters_franka_reach}
\end{table}
In this environment, the task is to solve an inverse kinematics problem, by moving the end-effector to a target pose, while minimizing abrupt changes.
The policies are trained in IsaacLab with 4096 parallel environments. The timesteps are aggregated in \textbf{iterations}. One iteration corresponds to 24 timesteps over all the environments, so that 1 iteration consists of $24\times 4096 = 98304$ timesteps. The \textbf{termination condition} determines the end of the episode. In this environment, an episode ends whenever it lasts longer than 12 seconds, corresponding to 15 iterations.
At the end of each iteration, for all the environments that reached the termination condition, we perform backpropagation for 8 epochs of 4 mini-batches each.

The \textbf{reward function} is given by:
\begin{equation}\label{eq:reward_reach}
\begin{aligned}
r_t =&
-0.2\, \|p_{ee,t}-p^{*}_{ee,t}\|_2 \\
&+ 0.1 \left[
1 - \tanh\left(
\frac{\|p_{ee,t}-p^{*}_{ee,t}\|_2}{0.1}
\right)
\right] \\
&-0.1\, \left\|
\log(\Phi(q_{ee,t}) \Phi^{-1}(q^{*}_{ee,t}))^\vee
\right\|_2 \\
&-k_{l_2}\,\|a_t-a_{t-1}\|_2^2
-k_{vel}\,\|\dot{q}_t\|_2^2 .
\end{aligned}
\end{equation}
where 
\begin{itemize}
    \item $p_{ee,t}$ and $p^{*}_{ee,t}$ denote the current end-effector position and the target pose of the end-effector, respectively;
    \item $q_{ee,t}$ and $q^{*}_{ee,t}$ denote the current end-effector quaternion and the target pose of the end-effector quaternion;
    \item $a_t$ and $a_{t-1}$ denote the current and previous action;
    \item $\dot q_t$ denotes the current joint velocity;
    \item $\Phi(\cdot)\colon \mathbb{H}\to SO(3)$ maps the quaternions to the $3\times 3$ corresponding orthogonal matrix, $\log(\cdot)\colon SO(3)\to \mathfrak{so}(3)$ is the matrix logarithm. Finally, $(\cdot)^\vee\colon \mathfrak{so}(3)\to\mathbb{R}^3$ maps skew symmetric matrices to their three dimensional vector representation.
\end{itemize}
The terms $k_{l_2}$ and $k_{vel}$ are used for curriculum learning. They are initially set to $10^{-4}$. However, after 187 iterations these weights are changed to  $10^{-3}$ and $5\times10^{-3}$, respectively.
The target pose and orientation $p^{*}_{ee,t}, q^{*}_{ee,t}$ are resampled every 4 seconds in the simulation, corresponding to 5 iterations. The sampling happens uniformly in the box defined in Table~\ref{tab:target_pose_sampling}. Notice that the orientation is sampled according to Euler angles with fixed roll and pitch and the resulting orientation is then converted to quaternions.\\
\begin{table}[!t]
    \centering
    \begin{tabular*}{\textwidth}{@{\extracolsep{\fill}}|l l c|}
        \hline
        \textbf{Description} &\textbf{Variable} &  \textbf{Possible Values} \\
        \hline
        Target $x$ position &$p^*_{ee, x}$ [m] & $(0.35,\, 0.65)$ \\
        Target $y$ position &$p^*_{ee, y}$ [m] & $(-0.20,\, 0.20)$ \\
        Target $z$ position &$p^*_{ee, z}$ [m] & $(0.15,\, 0.50)$ \\
        Target roll angle &Roll [rad] & $0$ \\
        Target pitch angle &Pitch [rad] & $\pi$  \\
        Target yaw angle &Yaw [rad] & $(-\pi,\, \pi)$ \\
        \hline
    \end{tabular*}
    \caption{Uniform sampling ranges for the target end-effector pose.}
    \label{tab:target_pose_sampling}
\end{table}
The \textbf{observation space} is described in Table~\ref{tab:obs_reach}. It is 32-dimensional, and it consists in 9 observations about position and 9 observations about velocity. It is then concatenated to the 7-dimensional target pose described by the desired position and quaternion of the end-effector and the 7-dimensional previous action.\\
\begin{table}[!t]
\centering
\resizebox{\textwidth}{!}{
\begin{tabular*}{\textwidth}{@{\extracolsep{\fill}}|c c l|}
\hline
\textbf{Indices} & \textbf{Dimensions} & \textbf{Description} \\
\hline
1-7   & 7 & Arm joint positions \\
8-9   & 2 & Gripper finger positions \\
10-16  & 7 & Arm joint velocities \\
17-18 & 2 & Gripper finger velocities \\
19-25 & 7 & End-effector target pose (Cartesian coordinates and quaternion)\\
25-32 & 7 & Previous action\\
\hline
\end{tabular*}}
\caption{Observation space used in \code{Isaac-Reach-Franka-v0}}
\label{tab:obs_reach}
\end{table}
The \textbf{action space} is given by the desired relative joint positions. That is, the output of the policy is interpreted as desired joints angles, and it is therefore a 7-dimensional continuous action space. Formally, let $q$ denote the current joint position, $q_d$ denote the desired position and $\dot q$ denote the current velocity. Then, the torque $\tau$ applied to each joint is computed as
\begin{equation}\label{eq:stiffness_damping_torque}
    \tau = K_p(q_d-q) - K_d\dot q,
\end{equation}
where the hyperparameters $K_p$ and $K_d$ are commonly named \textit{stiffness} and \textit{damping}, respectively.
Table~\ref{tab:ppo_hyperparameters_franka_reach} contains the hyperparameters used in \code{Isaac-Reach-Franka-v0}.

\section{ADDITIONAL RESULTS}\label{app:additional results}

In this section we report the full experimental evaluation of the theoretical results, including statistical significance tests and plots showing the results of fitting, as well as the changes in gradient magnitude and Hessian sharpness in response to changes in the standard deviation $\sigma$ in \code{Isaac-Reach-Franka-v0}.
Figure~\ref{fig:hessian_and_gradient_fitting} shows how restarting $\sigma$ affects the evolution of the gradient norm and the Hessian largest eigenvalue (or sharpness).
We observe a significant effect, suggesting that $\sigma$ is the main factor affecting these metrics.
\begin{figure}[!t]
    \centering
    \begin{subfigure}{0.49\textwidth}
        \centering
        \includegraphics[width=\linewidth]{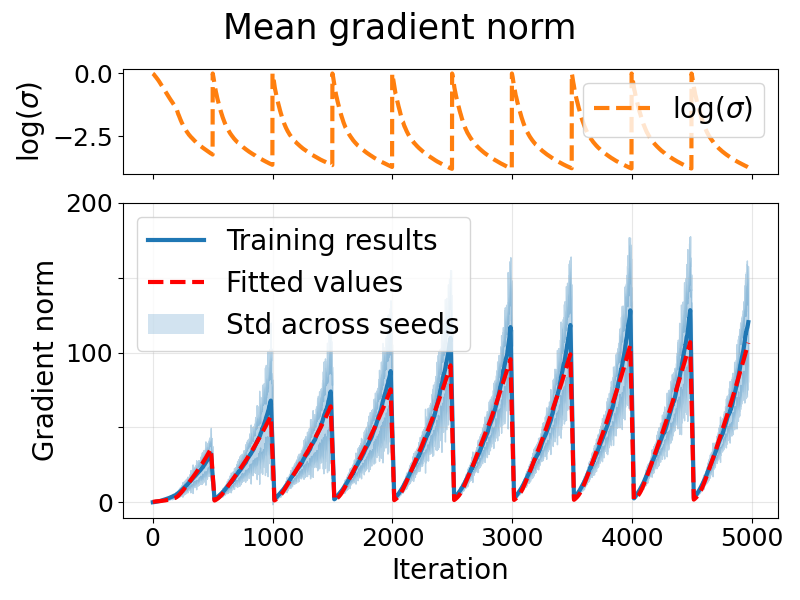}
    \end{subfigure}
    \hfill
    \begin{subfigure}{0.49\textwidth}
        \centering
        \includegraphics[width=\linewidth]{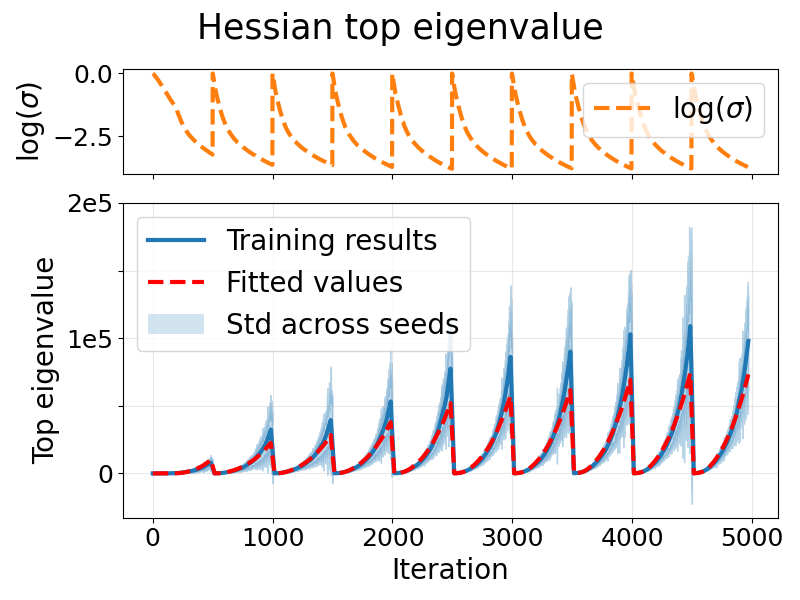}
    \end{subfigure}
    \caption{Evolution of the gradient norm and Hessian sharpness in \code{Isaac-Reach-Franka-v0}, when periodically resetting $\sigma$. Results are obtained over 10 seeds, and the plots are smoothed with a window of 30 iterations.}
    \label{fig:hessian_and_gradient_fitting}
\end{figure}

Table~\ref{tab:grad_fitting_results} contains the fitting results of each 500 iteration window used to fit $\alpha_1$ and $\beta_1$ the regression equation in ~\eqref{eq:log_regression_estimates}. 
We observe that, in the initial training phase, $\alpha_1$ is slightly higher, but then it hovers the value of 1.2. This is consistent with Theorem~\ref{thm:minibatch_gradient_norm}, deriving a bound with exponent 1.
Interestingly, there is a progressive increase in the value of $\beta_1$, which almost doubles suggesting that there are other factors positively contributing to the gradient norm increase.  
Moreover, we always achieve large values of $\mathrm{R}^2$, suggesting that the log-linear fitting is relevant. 

\begin{table}[t]
\centering
\begin{tabular}{|c | c c c c|}
\hline
\textbf{Cycle} & \textbf{Mean }$\mathbf{R^2}$ &\textbf{ Mean log }$\mathbf{R^2}$ & $\mathbf{\alpha_1}$ & $\mathbf{\exp(\beta_1)}$ \\
\hline
1 & 0.8497 & 0.8937 & 1.3182 & 0.5110 \\
2 & 0.8003 & 0.6805 & 1.2057 & 0.7018 \\
3 & 0.7788 & 0.5993 & 1.2045 & 0.7555 \\
4 & 0.7806 & 0.6129 & 1.2200 & 0.7782 \\
5 & 0.7728 & 0.6149 & 1.2447 & 0.8177 \\
6 & 0.7779 & 0.5986 & 1.2428 & 0.8758 \\
7 & 0.7845 & 0.6004 & 1.2408 & 0.9286 \\
8 & 0.7801 & 0.6098 & 1.2423 & 0.9665 \\
9 & 0.7844 & 0.5940 & 1.2380 & 1.0080 \\
10 & 0.7903 & 0.6048 & 1.2323 & 1.0752 \\
\hline
\rule{0pt}{12pt}\textbf{Total} & $0.7899 \pm 0.0377$ & $0.6409\pm 0.0875$ & $1.2389 \pm 0.0594$& $0.8418 \pm 0.2067$ \\[1pt]
\hline
\end{tabular}
\caption{Cycle-wise fitting results for the mean gradient norm. The intervals in the \textit{Total} row indicate one standard deviation.}
\label{tab:grad_fitting_results}
\end{table}

In Table~\ref{tab:hess_fitting_results} we report the fitting results for $\alpha_2$ and $\beta_2$ over each cycle of $\sigma$ reinitialization.
Once again, we observe high values of $\mathrm{R}^2$ and an exponent hovering the value of 2, consistently with Theorem~\ref{thm:lambda_max_bound_squared}.
Moreover, there is a large tenfold increase in the value of $\beta_2$ during training, suggesting that, while $\sigma$ is the most important factor in the sharpness, there are additional components affecting this parameter that are not explained by our model.

\begin{table}[t]
\centering
\begin{tabular}{|c | c c c c|}
\hline
\textbf{Cycle} & \textbf{Mean }$\mathbf{R^2}$ &\textbf{ Mean log }$\mathbf{R^2}$ & $\mathbf{\alpha_2}$ & $\mathbf{\exp(\beta_2)}$ \\
\hline
1 & 0.8250 & 0.9337 & 2.5289 & 2.8127 \\
2 & 0.8345 & 0.9051 & 2.1686 & 7.9413 \\
3 & 0.8285 & 0.8647 & 2.1581 & 9.9758 \\
4 & 0.8183 & 0.8668 & 2.1580 & 11.6019 \\
5 & 0.7937 & 0.8669 & 2.1825 & 13.3129 \\
6 & 0.7861 & 0.8587 & 2.1730 & 15.5046 \\
7 & 0.7960 & 0.8565 & 2.1603 & 17.6697 \\
8 & 0.7974 & 0.8624 & 2.1595 & 19.6723 \\
9 & 0.8019 & 0.8552 & 2.1442 & 22.4115 \\
10 & 0.8128 & 0.8584 & 2.1372 & 25.4125 \\
\hline
\rule{0pt}{12pt}\textbf{Total} & $0.8094 \pm 0.0664$ & $0.8728\pm 0.0245$ & $2.1970 \pm 0.1338$& $14.6315 \pm 7.9061$ \\[1pt]
\hline
\end{tabular}
\caption{Cycle-wise fitting results for the Hessian top eigenvalue. The intervals in the \textit{Total} row indicate one standard deviation.}
\label{tab:hess_fitting_results}
\end{table}

\end{document}